\documentclass{article} % For LaTeX2e
\usepackage{nips14submit_e,times}
\usepackage{url}
\usepackage{amsmath}
\usepackage{color}
\usepackage{verbatim}
\usepackage{times}
\usepackage{natbib}
\bibpunct[; ]{(}{)}{,}{a}{}{;}
\usepackage{amssymb}
\def\slantfrac#1#2{\kern.1em^{#1}\kern-.3em/\kern-.1em_{#2}}

\usepackage{enumitem}
%\usepackage{tikz}
%\usetikzlibrary{trees}
%\usepackage{subfiles}

%\usepackage[title=normal, mathspacing = normal]{savetrees}

\usepackage{amsthm}
\newtheorem{theorem}{Theorem}
\newtheorem{lemma}[theorem]{Lemma}
\newtheorem{corollary}[theorem]{Corollary}
\newtheorem{proposition}[theorem]{Proposition}

\theoremstyle{remark}

\theoremstyle{definition}
\newtheorem{example}[theorem]{Example}
\newtheorem{definition}[theorem]{Definition}

\usepackage{graphicx}
\graphicspath{{./Figures/}}

\usepackage{sidecap}
\usepackage{mathtools}
\usepackage{wrapfig}

\newcommand{\bmx}[0]{\begin{bmatrix}}
\newcommand{\emx}[0]{\end{bmatrix}}

\newcommand{\qlay}[1]{#1}
\newcommand{\vect}[1]{\mathbf{#1}}

\newcommand{\matr}[1]{\mathbf{#1}}

\newcommand{\diag}[0]{\operatorname{diag}}

\newcommand{\vb}[0]{\vect{b}}

\newcommand{\vx}[0]{\vect{x}}
\newcommand{\vw}[0]{\vect{w}}

\newcommand{\mW}[0]{\matr{W}}

\newcommand{\NN}[0]{\mathcal{N}}
\newcommand{\Scal}[0]{\mathcal{S}}

\newcommand{\R}{\mathbb{R}}

 % Cho definitions

\usepackage{bibentry} 

\usepackage{xr}
%\externaldocument{Paper}
%\externaldocument{Introduction}
%\externaldocument{Complexity}
%\externaldocument{DeepRectifier}
%\externaldocument{DeepMaxout}
%\externaldocument{Conclusions}
%\externaldocument{Proofs}

% Cho: They all sound good!
%\newcommand{\todog}[1]{\textcolor{red}{#1}}
\newcommand{\todog}[1]{#1}
\newcommand{\todor}[1]{\textcolor{black}{#1}}

\begin{document}

\nipsfinalcopy % Uncomment for camera-ready version
\title{On the Number of Linear Regions of\\ Deep Neural Networks}

\author{
Guido Mont\'ufar \\
Max Planck Institute for Mathematics in the Sciences\\ 
\texttt{montufar@mis.mpg.de} 
\And
{\bf Razvan Pascanu} \\
%D\'{e}partement d'Informatique et de Recherche Op\'{e}rationelle\\ 
Universit\'{e} de Montr\'{e}al \\
\texttt{pascanur@iro.umontreal.ca}  \\
\and
{\bf Kyunghyun Cho}  \\
%D\'{e}partement d'Informatique et de Recherche Op\'{e}rationelle\\ 
Universit\'{e} de Montr\'{e}al \\
\texttt{kyunghyun.cho@umontreal.ca}  \\
\And
{\bf Yoshua Bengio} \\
%D\'{e}partement d'Informatique et de Recherche Op\'{e}rationelle\\
Universit\'{e} de Montr\'{e}al, CIFAR Fellow \\
\texttt{yoshua.bengio@umontreal.ca} 
}
\maketitle

\begin{abstract}
We study the complexity of functions computable by deep feedforward neural networks with piecewise linear activations in terms of the symmetries and the number of linear regions that they have. Deep networks are able to sequentially map portions of each layer's input-space to the same output. In this way, deep models compute functions that react equally to complicated patterns of different inputs. The compositional structure of these functions enables them to re-use pieces of computation exponentially often in terms of the network's depth. This paper investigates the complexity of such compositional maps and contributes new theoretical results regarding the advantage of depth for neural networks with piecewise linear activation functions. In particular, our analysis is not specific to a single family of models, and as an example, we employ it for rectifier and maxout networks. We improve complexity bounds from pre-existing work and investigate the behavior of units in higher layers.  

{\bf Keywords: }Deep learning, neural network, input space partition, rectifier, maxout
\end{abstract}

\section{Introduction}
Artificial neural networks with several hidden layers, called \textit{deep}
neural networks, have become popular due to their unprecedented success in a
variety of machine learning tasks~\citep[see,
e.g.,][]{Krizhevsky-2012,Ciresan-et-al-2012,Goodfellow_maxout_2013,Hinton-et-al-2012}.
In view of this empirical evidence, deep neural networks are becoming
increasingly favoured over {\em shallow} networks (i.e., with a single layer of
hidden units), and are often implemented with more than five layers.  At the
time being, however, only a limited amount of publications have investigated
deep networks from a theoretical perspective. 
Recently, \citet{Delalleau+Bengio-2011-small} showed that a shallow network
requires exponentially many more sum-product hidden units\footnote{A single sum-product hidden layer summarizes a layer of product
units followed by a layer of sum units.} than a deep sum-product network in
order to compute certain families of polynomials.  We are interested in
extending this kind of analysis to more popular neural networks. 

There is a wealth of literature discussing approximation, estimation, and
complexity of artificial neural networks~\citep[see,
e.g.,][]{anthony2009neural}.  A well-known result states that a feedforward
neural network with a single, huge, hidden layer is a universal approximator of Borel
measurable functions~\citep[see][]{Hornik89,cybenko1989}. 
Other works have investigated universal approximation of probability distributions by deep
belief networks~\citep{Roux2010,Montufar:2011}, as well as their approximation
properties~\citep{montufar2013universal,krause13}.

These previous theoretical results, however, 
do not trivially apply to 
the types of deep neural networks that have seen success in recent years. 
Conventional neural networks often employ either hidden units with a bounded
smooth activation function, or Boolean hidden units. 
On the other hand, recently it has become more common to use piecewise linear functions, 
such as the {\em rectifier} activation $g(a)=\max\{0,a\}$ 
\mbox{\citep{Glorot+al-AI-2011-small,Nair-2010}} 
or the {\em maxout} activation $g(a_1,\ldots, a_k)=\max\{a_1,\ldots,a_k\}$ \mbox{\citep{Goodfellow_maxout_2013}}. 
The practical success of deep neural networks with piecewise linear units calls for the theoretical analysis specific for this
type of neural networks. 

In this respect, \citet{Pascanu2014} reported a theoretical result on
the complexity of functions computable by deep feedforward networks with
rectifier units. They showed that, in the asymptotic limit of many hidden
layers, deep networks are able to separate their input space into exponentially
more linear response regions than their shallow counterparts, 
despite using the same number of computational units. 

\todog{Building on the ideas from~\citep{Pascanu2014}, we develop a general framework for analyzing deep models with piecewise linear activations.} 
The intermediary layers of these models are able to map several pieces of their inputs into the same output. 
The layer-wise composition of the functions computed in this way re-uses low-level computations exponentially often as the number of layers increases. 
\todog{This key property enables deep networks to compute highly complex and structured functions.} 
We underpin this idea by estimating the number of linear regions of functions computable by two important types of piecewise linear networks: 
with rectifier units and with maxout units. 

Our results for the complexity of deep rectifier networks yield a significant improvement {over} the previous
results on rectifier networks mentioned above, 
showing a favourable behavior of deep over shallow networks even {with} a moderate number of hidden layers. 
\todog{Our analysis of deep rectifier and maxout networks serves as plattform to study a broad variety of related networks, 
such as convolutional networks.}

The number of linear regions of the functions that can be computed by a
given model is a measure of the model's flexibility.  An example of this
is given in Fig.~\ref{fig:eye_candy}, which compares the learnt {decision}
boundary of a single-layer and a two-layer model with the same number of
hidden units 
(see details in Appendix~\ref{sec:sinusoidal}).
%(see details in the Supplementary Material).  
This
illustrates the advantage of depth; the deep model captures the desired
boundary more accurately, approximating it with a larger number of linear
pieces.

As noted earlier, deep networks are able to {\em identify} an exponential
number of input neighborhoods by mapping them to a common output of some
intermediary hidden layer. 
The computations carried out on the activations of this intermediary layer are replicated
many times, once in each of the identified neighborhoods. 
This allows the networks to compute very complex looking functions even when they are
defined with relatively few parameters. 

The number of parameters is an upper bound for the dimension of the set of
functions computable by a network, and a small number of parameters means that
the class of computable functions has a low dimension.  The set of functions
computable by a deep feedforward piecewise linear network, although low
dimensional, achieves exponential complexity by re-using and composing features
from layer to layer.

\begin{figure}
\centering
\begin{minipage}{0.52\textwidth}
\centering
\includegraphics[width=.9\columnwidth,clip=true, trim=10cm 5cm 7cm 5cm]{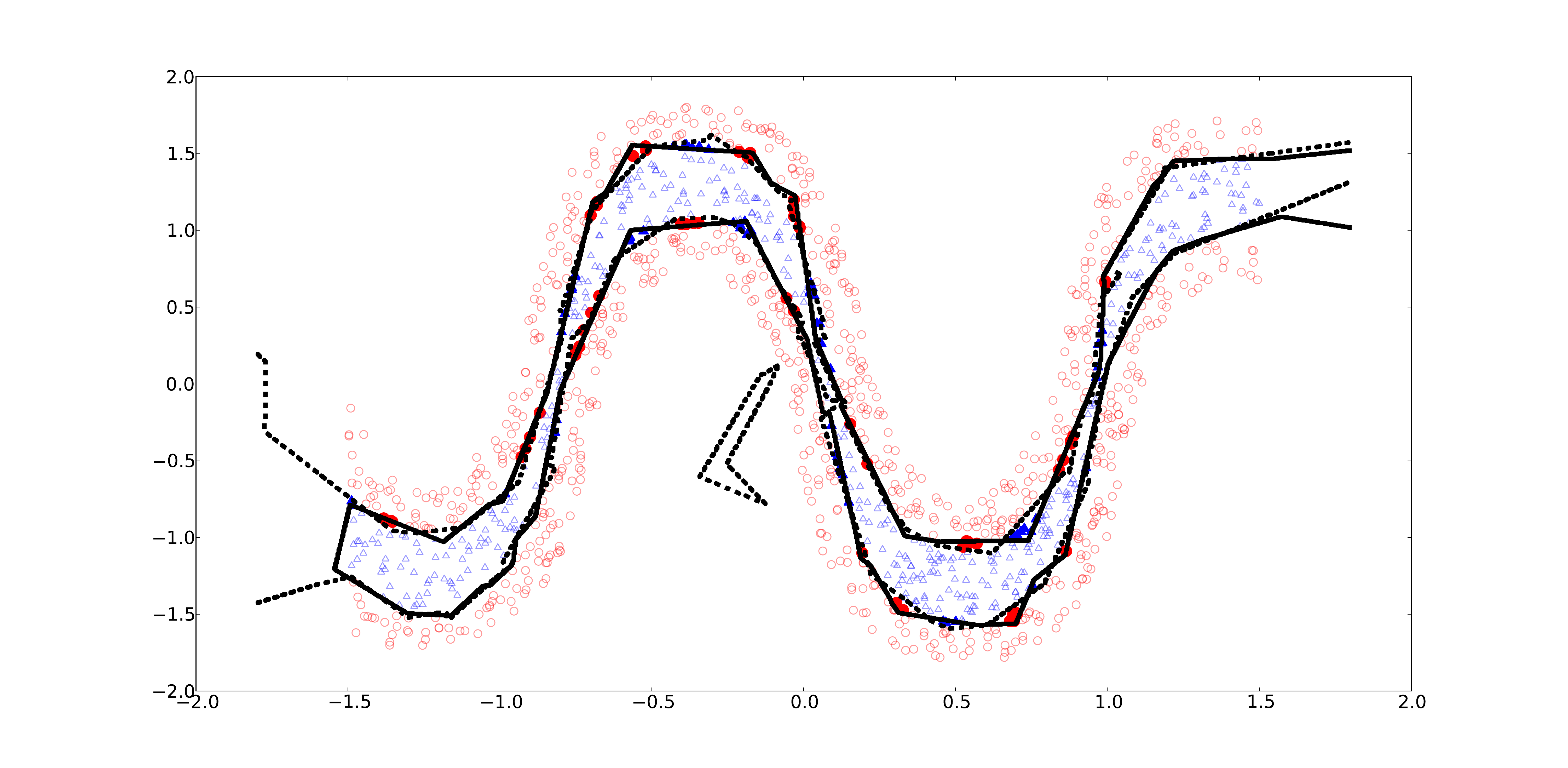}
\end{minipage}
\begin{minipage}{0.45\textwidth}
\centering
\includegraphics[width=.99\columnwidth,clip=true, trim=19cm 19cm 27cm 5.2cm]{single_layer.pdf}
\end{minipage}

\caption{Binary classification using a shallow model with 20 hidden units (solid line) and a deep model with two layers of 10 units each
    (dashed line). The right panel shows a close-up of the left panel. 
Filled markers indicate errors made by the shallow model.}
\label{fig:eye_candy}

\vspace{-3mm}
\end{figure}

\section{Feedforward Neural Networks and their Compositional Properties}
\label{sec:general}

In this section we discuss the ability of deep feedforward networks to re-map
their input-space to create complex symmetries by using only relatively few
computational units. The key observation of our analysis is that each layer of a
deep model is able to map different regions of its input to a common output.
This leads to a compositional structure, where computations on higher layers are
effectively replicated in all input regions that produced the same output at a
given layer.  The capacity to replicate computations over the input-space grows
exponentially with the number of network layers. 
Before expanding these ideas, we introduce basic definitions needed
in the rest of the paper.  At the end of this section, we give an intuitive
perspective for reasoning about the replicative capacity of deep models. 

\subsection{Definitions}
\label{sec:definitions}

A {\em feedforward neural network} is a composition of layers of computational units which
defines a function $F\colon \R^{n_0}\to\R^{\operatorname{out}}$  of the form  
\begin{equation}
    \label{eq:mlp_output}
F(\vx ; \theta) = 
f_{\operatorname{out}} \circ g_{\qlay{L}} \circ f_{\qlay{L}}\circ \cdots \circ g_{\qlay{1}} \circ f_{\qlay{1}} ( \vx)  ,
\end{equation}
where 
$f_l$ is a linear pre-activation function 
and $g_{\qlay{l}}$ is a nonlinear activation function. 
The parameter $\theta$ is composed of {\em input} weight matrices $\mW_{\qlay{l}}\in\R^{k\cdot n_l\times n_{l-1}}$ and {\em bias} vectors $\vb_l\in\R^{k \cdot n_l}$ for each layer $l\in[L]$. 

The output of the $l$-th layer is a vector 
$\vx_{\qlay{l}}  = \left[ \vx_{l,1} , \ldots, \vx_{l,n_l}  \right]^\top$ of activations $\vx_{l,i}$ of the units
$i\in[n_l]$ in that layer. 
This is computed from the activations of the preceding layer by $\vx_l=g_l(f_l (\vx_{l-1}))$. 
Given the activations $\vx_{l-1}$ of the units in the $(l-1)$-th layer, 
the pre-activation of layer $l$ is given by  
\[
f_{l} (\vx_{l-1}) = \mW_{l} \vx_{l-1} + \vb_{l}, 
\]
where $f_{\qlay{l}}  = \left[ f_{l,1} , \ldots, f_{l,{n_l}} \right]^\top$ is an array composed of $n_l$ pre-activation vectors $f_{l,i}\in\R^{k}$, and the activation of the $i$-th unit in the $l$-th layer is given by 
\[
  \vx_{l,i} = g_{l,i} (  f_{l,i} (\vx_{l-1}) ) .  
\]
We will abbreviate $g_l\circ f_l$ by $h_l$. 
When the layer index $l$ is clear, we will drop the corresponding subscript. 
We are interested in piecewise linear activations, 
and will consider the following two important types. 
\begin{itemize} 
    \item 
    \begin{minipage}{.21\textwidth} 
    Rectifier unit: 
    \end{minipage}
    $g_i (f_i ) = \max\left\{ 0, f_i  \right\}$, where $f_i \in \R$ and $k=1$.
    \item  
    \begin{minipage}{.21\textwidth} 
    Rank-$k$ maxout unit: 
    \end{minipage}
    		\begin{minipage}[t]{.7\textwidth}
    		$g_i(f_i) =  
 			\max\{ f_{i,1},\ldots, f_{i,k} \}$,  where $f_i=[f_{i,1},\ldots, f_{i,k}] \in \R^k$.  
            \end{minipage}
\end{itemize}

The {\em structure} of the network refers to the way its units are arranged. 
It is specified by the number $n_0$ of input dimensions, the number of layers $L$, and the number of units or {\em width} $n_l$ of each layer. 

We will classify the functions computed by different network structures, for different choices of parameters, in terms of their number of linear regions. 
A {\em linear region} of a piecewise linear function $F\colon \R^{n_0}\to\R^m$ is a maximal connected subset of the input-space $\R^{n_0}$, on which
$F$ is linear. 
For the functions that we consider, each linear region has full dimension, $n_0$. 

\subsection{Shallow Neural Networks}
\label{sec:single}

Rectifier units have two types of behavior; they can be either constant $0$ or linear, depending on their inputs. 
The boundary between these two behaviors is given by a hyperplane, 
and the collection of all the hyperplanes coming from all units in a rectifier layer forms a \emph{hyperplane arrangement}. 
In general, if the activation function $g\colon \R\to\R$ has a distinguished (i.e., irregular) behavior 
at
zero (e.g., an inflection point or non-linearity), then the function
$\R^{n_{0}}\to\R^{n_1};\; \vx \mapsto g(\mW \vx + \vb)$ has a distinguished
behavior at all inputs from any of the hyperplanes $H_i :=
\{\vx\in\R^{n_{0}}\colon \mW_{i,:} \vx + \vb_i=0\}$ for $i\in[n_1]$.  The hyperplanes
capturing this distinguished behavior also form a hyperplane arrangement. 

The hyperplanes in the arrangement split the input-space into several regions.
Formally, a {\em region} of a
hyperplane arrangement $\{H_1,\ldots, H_{n_1}\}$ is a connected component of
the complement $\R^{n_{0}} \setminus (\cup_i H_i)$, i.e., a set of points delimited
by these hyperplanes (possibly open towards infinity). 
The number of regions of an arrangement can be given in terms of a characteristic function of the arrangement, 
as shown in a well-known result by~\citet{zaslavsky1975facing}.  An arrangement of $n_1$
hyperplanes in $\R^{n_0}$ has at most $\sum_{j=0}^{n_0}{n_1\choose j}$ regions.
Furthermore, this number of regions is attained if and only if the hyperplanes
are in general position. 
This implies that the maximal number of linear regions of functions computed by a
shallow rectifier network with $n_0$ inputs and $n_1$ hidden units is $\sum_{j=0}^{n_0}{n_1\choose j}$ 
\citep[see][Proposition~5]{Pascanu2014}. 

\subsection{Deep Neural Networks}
\label{sec:deep}

We start by defining the identification of input neighborhoods mentioned in the introduction more formally:  

\begin{definition}
    \label{def:equivset}
     A map $F$ {\em identifies} two neighborhoods $S$ and $T$ of its input domain if it maps them to a common subset $F(S) = F(T)$ of its output domain. 
In this case we also say that $S$ and $T$ are {\em identified} by $F$. 
\end{definition}

For example, the four quadrants of 2-D Euclidean space
are regions that are  identified by the absolute value function
$g\colon \R^2\to\R^2$; 
\begin{equation}
     \label{eq:abs}
    g(x_1,x_2) = \left[ |x_1| \;,\; |x_2| \right]^\top. 
\end{equation}

The computation carried out by the $l$-th layer of a feedforward network on a
set of activations from the $(l-1)$-th layer is effectively carried out for all
regions of the input space that lead to the same activations of the $(l-1)$-th
layer.  One can choose the input weights and biases of a given layer in such a
way that the computed function behaves most interestingly on those activation
values of the preceding layer which have the largest number of preimages in the
input space, thus replicating the interesting computation many times in the input
space and generating an overall complicated-looking function. 

For any given choice of the network parameters, each hidden layer $l$ computes
a function $h_l = g_l \circ f_l$ on the output activations of the preceding
layer.  We consider the function $F_l\colon \R^{n_0}\to \R^{n_l}$; $F_l  :=
h_l\circ \cdots \circ h_1$ that computes the activations of the $l$-th hidden
layer.  We denote the image of $F_l$ by $S_l\subseteq \R^{n_l}$, i.e., the set
of (vector valued) activations reachable by the $l$-th layer for all possible
inputs. 
Given a subset $R \subseteq S_l$, we denote by $P^l_R$ the set of subsets 
$\bar{R}_1, \ldots, \bar{R}_k \subseteq S_{l-1}$ that are mapped by $h_l$ onto $R$; that is, subsets that satisfy 
$h_l(\bar{R}_1)=\cdots = h_l (\bar{R}_k) = R$. 
See Fig.~\ref{fig:equivalence} for an illustration. 

The number of separate input-space neighborhoods  
that are mapped to a common neighborhood $R\subseteq S_l\subseteq \R^{n_l}$ %in the image of the $j$th layer  
can be given recursively as 
\begin{align}
    \label{eq:ncopies} 
    \NN_R^{l} =  \sum_{R' \in  P^l_R} \NN_{R'}^{l-1}, \qquad 
    \NN_R^{0} =  1,\mbox{ for each region } R \subseteq \R^{n_0}. 
\end{align}
For example, $ P^1_R$ is the
set of all disjoint input-space neighborhoods 
whose image by the function computed by the first layer, $h_1\colon \vx\mapsto g (\mW \vx + \vb )$, equals $R \subseteq S_1\subseteq\R^{n_1}$. 

The recursive formula~\eqref{eq:ncopies} counts the number of identified sets by moving along the branches of a tree rooted at the set $R$ of the $j$-th layer's output-space  
(see Fig.~\ref{fig:equivalence}~(c)).  
Based on these observations, 
we can estimate the maximal number of linear regions as follows. 

\begin{figure}[t]
\centering
    \begin{minipage}[b]{0.48\textwidth}
        \centering
        \includegraphics[width=0.8\columnwidth]{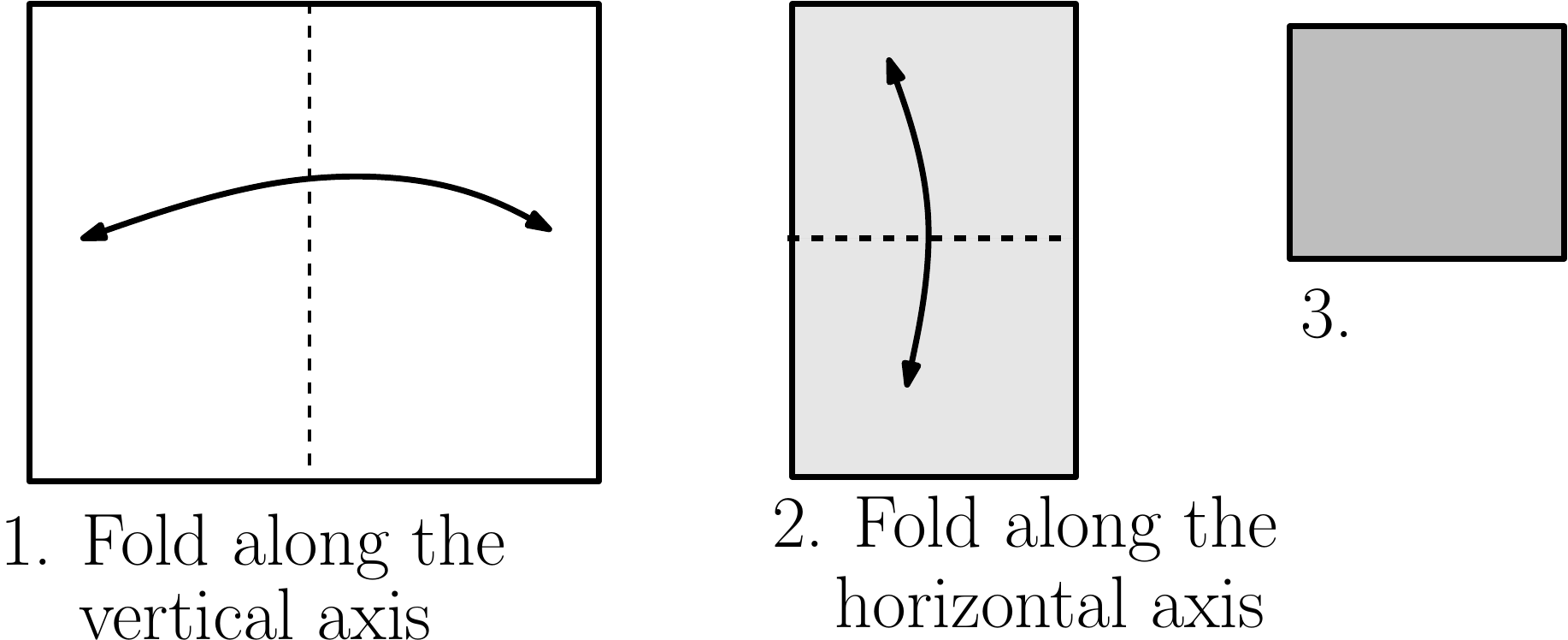}

\vspace{.2cm}

        (a)

\vspace{.3cm}

        \includegraphics[width=0.8\columnwidth]{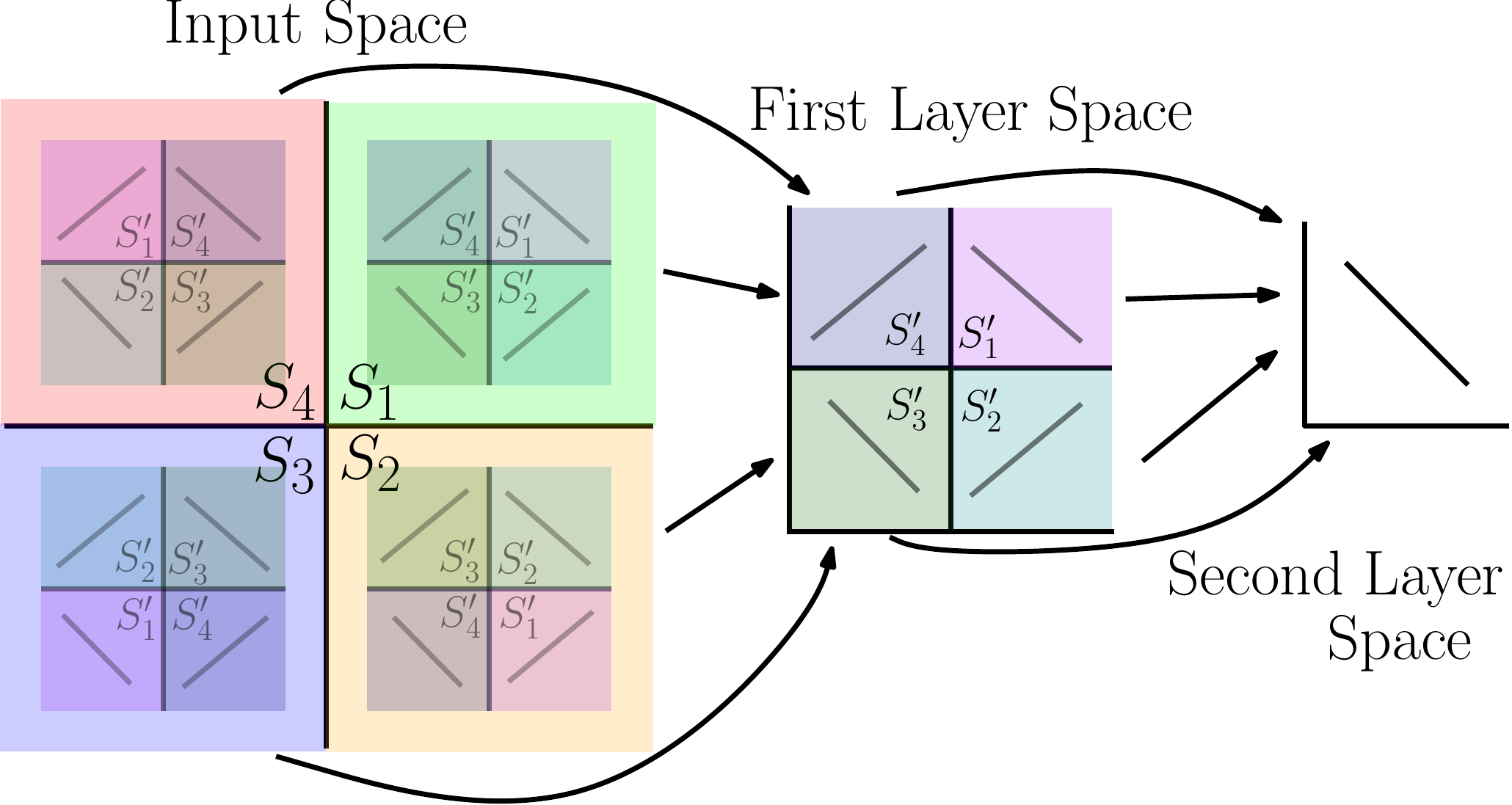}

        (b)
    \end{minipage}
    \qquad
    \begin{minipage}[b]{0.4\textwidth}
    \centering
    \includegraphics[width=.9\textwidth]{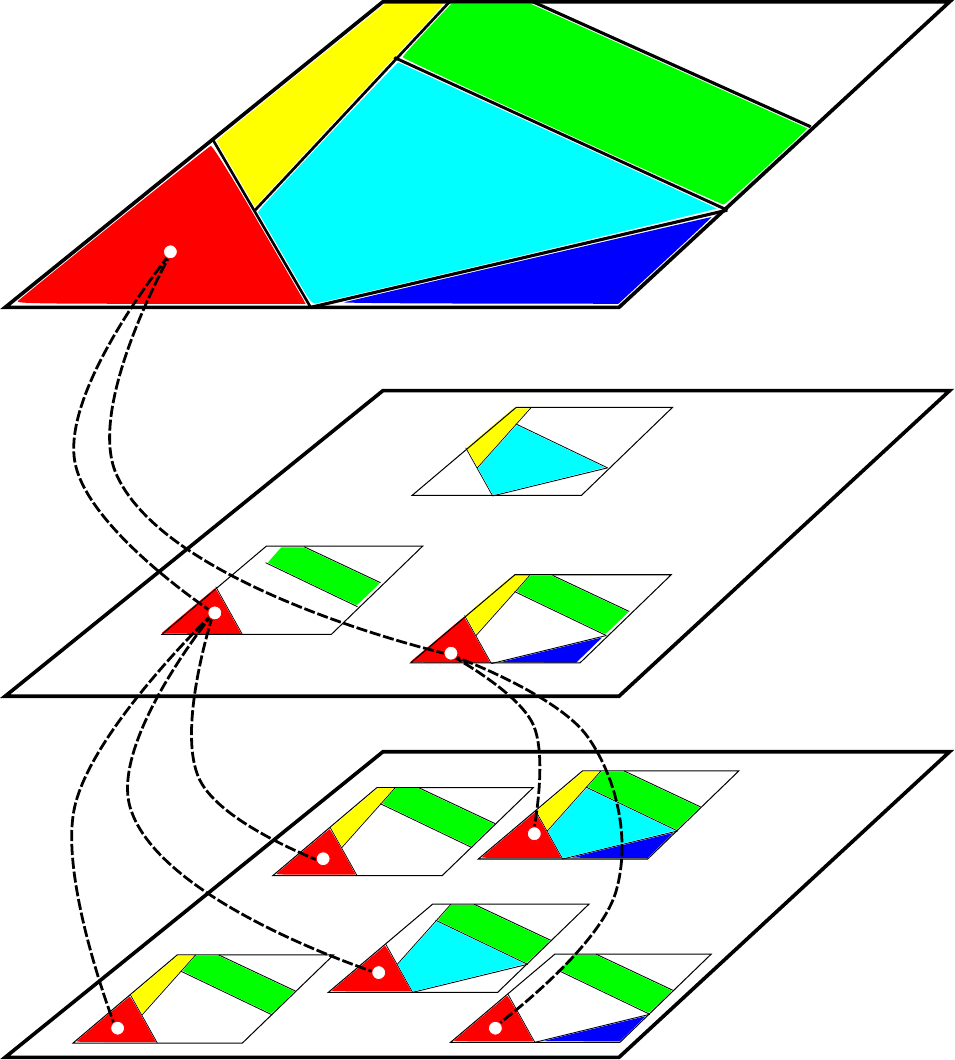}
    
    \vspace{.2cm}
    
    (c)
    \end{minipage}
    \caption{
    (a) Space folding of 2-D Euclidean space along the
    two axes. 
    (b) An illustration of how the top-level
partitioning (on the right) is replicated to the original input space (left). \label{fig:paper_folding}
    (c) Identification of regions across the layers of a deep model. \label{fig:equivalence}
    }
\vspace{-3mm}
\end{figure}

\begin{comment}
\begin{wrapfigure}{I}{0.4\textwidth}
\centering
\includegraphics[width=0.3\columnwidth]{tree_general.pdf}
\caption{ 
Identification of regions across the layers of a deep model.}
\label{fig:equivalence}
\end{wrapfigure}
\end{comment}

\begin{lemma} 
    \label{thm:gen1}
    The maximal number of linear regions of the functions computed by an $L$-layer neural
    network with 
    piecewise linear activations is at least 
$ \NN =  \sum_{R\in P^L} \NN_R^{L-1} $, 
    where $\NN_R^{L-1}$ is defined by Eq.~\eqref{eq:ncopies}, 
    and $P^L$ is a set of neighbordhoods in distinct linear regions of the function computed by the last hidden layer. 
\end{lemma}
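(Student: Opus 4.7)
The plan is to produce $\NN$ pairwise disjoint open neighborhoods of $\R^{n_0}$ that lie in $\NN$ pairwise distinct linear regions of $F$. I would begin by recalling that a linear region of $F$ is a maximal connected open subset on which every unit of every hidden layer stays inside a single linear piece of its activation function; equivalently, such a region is determined by a global activation pattern across all layers. Hence, to lower-bound the number of linear regions, it suffices to exhibit input-space neighborhoods that realize pairwise distinct activation patterns.

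I would then unfold the tree implicit in equation~\eqref{eq:ncopies}. The root level is $P^L$, whose elements $R$ represent distinct linear pieces of the function computed by the last hidden layer. For each root $R$, the children are the preimages $R' \in P^L_R$ that are mapped onto $R$ by $h_L$; each such $R'$ labels a distinct linear piece of $h_L$, hence a distinct activation pattern at layer $L$. Continuing this descent to the input level yields, beneath each root $R$, a subtree with exactly $\NN_R^{L-1}$ leaves, each leaf being an open neighborhood of $\R^{n_0}$ whose image at each intermediate layer $l$ lies in a prescribed linear piece $R^{(l)}$ of $h_l$. Summing over all roots in $P^L$ produces $\NN$ leaves in total, and on each leaf the composite $F$ is affine.

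Finally, I would argue that distinct leaves lie in distinct linear regions of $F$. Any two distinct leaves branch apart at some layer $l$, meaning their images at layer $l-1$ fall into different elements of $P^l_{R^{(l)}}$ (or, at the top, under different roots of $P^L$), i.e., into different linear pieces of $h_l$. At that layer at least one unit selects different linear branches on the two leaves, so their global activation patterns differ and the leaves must be contained in different linear regions of $F$. The main obstacle I anticipate is precisely this distinctness step: it must be checked carefully that each branching in the tree is labelled by a genuinely different linear piece of the relevant $h_l$, so that any two root-to-leaf paths produce differing activation patterns. This is ensured by the construction of $P^l_R$ as an enumeration of the distinct preimages of $R$ under $h_l$, and concludes the lower bound.
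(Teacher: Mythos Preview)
Your proposal is correct and follows essentially the same route as the paper: both count the input-space preimages along the tree of Eq.~\eqref{eq:ncopies} and argue that the resulting leaves lie in pairwise distinct linear regions of the composite map; the paper phrases the distinctness step as a one-line continuity argument, while you spell it out via activation patterns, which is more explicit but the same idea. One small indexing slip: since each $R\in P^L$ is a neighborhood in a linear region of $h_L$ and hence already lives in $S_{L-1}$, the first descent from a root $R$ should use $P^{L-1}_R$ (preimages under $h_{L-1}$), not $P^L_R$.
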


Here, the idea to construct a function with many linear regions is to use the first $L-1$ hidden layers to identify many input-space neighborhoods, mapping all of them to the activation neighborhoods $P^L$ of the $(L-1)$-th hidden layer, 
each of which belongs to a distinct linear region of the last hidden layer. 

We will give the detailed analysis of rectifier and maxout networks in Secs.~\ref{sec:rectifier} and~\ref{sec:maxout}. 

\subsection{Identification of Inputs as Space Foldings}

In this section, we discuss an intuition behind Lemma~\ref{thm:gen1} in terms
of {\em space folding}. 
A map $F$ that identifies two subsets $\Scal$ and
$\Scal'$ can be considered as an operator that {\em folds} its domain in such a way that the two subsets $\Scal$ and $\Scal'$ coincide and
are mapped to the same output. 
For instance, the absolute value function $g\colon \R^2\to \R^2$ from Eq.~\eqref{eq:abs} folds its domain twice (once along each coordinate axis), as illustrated in Fig.~\ref{fig:equivalence}~(a). 
This folding identifies the four quadrants of 2-D
Euclidean space. 
By composing such operations, the same 
kind of map can be applied again to the output, in order to re-fold the first folding. 

Each hidden layer of a deep neural network can be associated with a folding operator. 
Each hidden layer folds the space of activations of the previous layer. 
In turn, a deep neural network effectively folds its input-space recursively, starting with the first layer. 
The consequence of this recursive folding is that any function computed on the final folded space 
will apply to all the collapsed subsets identified by the map corresponding to the succession of foldings. 
This means that in a deep model any partitioning of the last layer's image-space is replicated in 
all input-space regions which are identified by the succession of foldings. 
Fig.~\ref{fig:paper_folding}~(b) offers an illustration of this replication property. 

\begin{figure}[t]
    \centering
    \includegraphics[width=0.6\columnwidth]{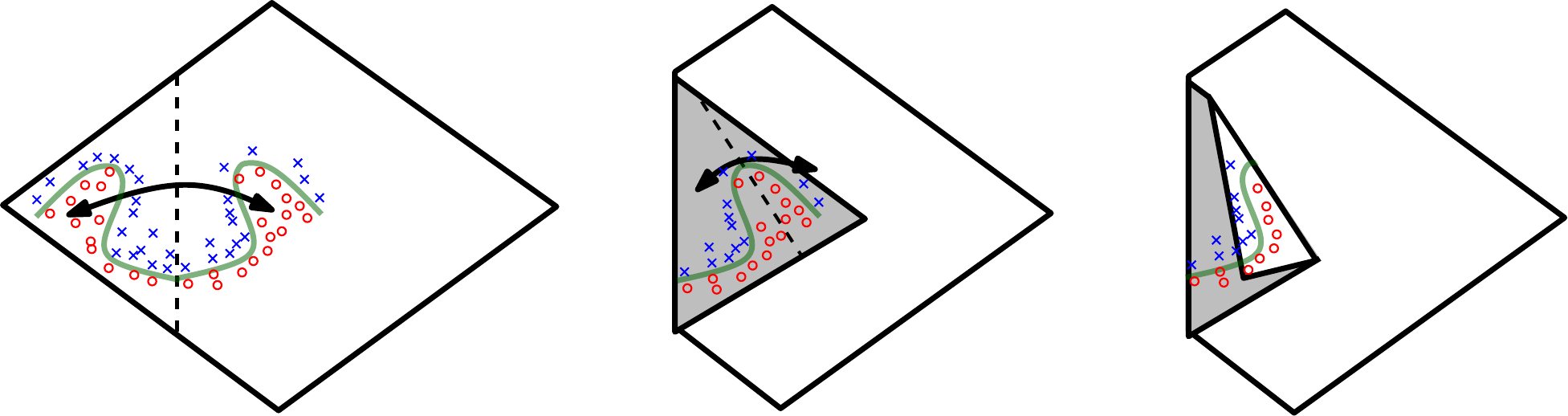}
    \caption{Space folding of 2-D space in a non-trivial way. Note how the folding 
    can potentially identify symmetries in the boundary that it needs to learn.}
    \label{fig:paper_folding_complex}
\vspace{-3mm}
\end{figure}

Space foldings are not restricted to foldings along coordinate axes and they do
not have to preserve lengths.  Instead, the space is folded depending on the
orientations and shifts encoded in the input weights $\mW$ and biases $\vb$ and
on the nonlinear activation function used at each hidden layer. 
In particular, this means that the sizes and orientations of identified
input-space regions may differ from each other.  See
Fig.~\ref{fig:paper_folding_complex}.

\subsection{Stability to Perturbation}

Our bounds on the complexity attainable by deep models (Secs.~\ref{sec:rectifier} and~\ref{sec:maxout}) are based on suitable
choices of the network weights.  However, this does not mean that the indicated
complexity is only attainable in singular cases. 

The parametrization of the functions computed by a neural network is
continuous.  More precisely, the map $\psi \colon \R^N \to C(\R^{n_0};
\R^{n_L})$; $\theta \mapsto F_\theta$, which maps input weights and biases
$\theta = \{\mW_i, \vb_i\}_{i=1}^L$ to the continuous functions $F_\theta\colon
\R^{n_0}\to \R^{n_L}$ computed by the network,
is continuous. 
Our analysis considers the number of linear regions of the functions $F_\theta$. 
By definition, each linear region contains an open neighborhood of the
input-space $\R^{n_0}$. 
Given any function $F_\theta$ with a finite number of linear regions, 
there is an $\epsilon > 0$ such that for each
$\epsilon$-perturbation of the parameter $\theta$, the resulting function
$F_{\theta+\epsilon}$ has at least as many linear regions as $F_\theta$. 
The linear regions of $F_\theta$ are preserved under small perturbations
of the parameters, because they have a finite volume. 

If we define a probability density on the space of parameters, 
what is the probability of the event that the function represented by the
network has a given number of linear regions?  By the above discussion, the
probability of getting a number of regions at least as large as the number
resulting from any particular choice of parameters (for a uniform measure
within a bounded domain) is nonzero, even though it may be very small. This is
because there exists an epsilon-ball of non-zero volume around that particular
choice of parameters, for which at least the same number of linear regions is
attained. 

For future work it would be interesting to study the partitions of parameter
space $\R^N$ into pieces where the resulting functions partition their
input-spaces into isomorphic linear regions, and to investigate how many of
these pieces of parameter space correspond to functions with a given number of
linear regions. 

\subsection{Empirical Evaluation of Folding in Rectifier MLPs}
\label{sec:faces}

We empirically examined the behavior of a trained MLP to see if it folds the
input-space in the way described above.  First, we make the observation that
tracing the activation of each hidden unit in this model gives a piecewise
linear map $\R^{n_0}\to \R$ (from inputs to activation values of that unit).
Hence, we can analyze the behavior of each unit by visualizing the {different 
weight matrices corresponding to the different linear pieces of this map.} 
The weight matrix of one piece of this map can be found by tracking the linear
piece used in each intermediary layer, starting from an input example.
This visualization technique, a byproduct of our theoretical analysis, is similar to the one proposed
by~\citet{Zeiler+et+al-arxiv2013b}, but is motivated by a different perspective.

After computing the activations of an intermediary hidden unit for each
training example, we can, for instance, inspect two examples that result in
similar levels of activation for a hidden unit. With the linear maps of the
hidden unit corresponding to the two examples we perturb one of the examples 
until it results in exactly the same activation. These two inputs then can be
safely considered as points in two regions identified by the hidden unit. 
In 
%the Supplementary Material 
Appendix~\ref{sec:visualization}
we provide details and examples of this
visualization technique. We also show inputs identified by a deep MLP. 

\section{Deep Rectifier Networks}
\label{sec:rectifier}

In this section we analyze deep neural networks with rectifier units, based on
the general observations from Sec.~\ref{sec:general}.  We improve upon the
results by \citet{Pascanu2014}, with a tighter lower-bound on the maximal
number of linear regions of functions computable by deep rectifier networks. 

First, let us note the following upper-bound, which follows directly from the
fact that each linear region of a rectifier network corresponds to a pattern of
hidden units being active: 
\begin{proposition}
The maximal number of linear regions of the functions computed by any rectifier
network with a total of $N$ hidden units is bounded from above by $2^N$. 
\end{proposition}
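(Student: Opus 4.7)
The plan is to exploit the hint in the statement: parametrize the possible behaviors of the network by \emph{activation patterns} and show that each pattern contributes at most one linear region. Concretely, label the $N$ rectifier units $1,\ldots,N$ across all layers and, for each input $\vx \in \R^{n_0}$, record the vector $\sigma(\vx) \in \{0,1\}^N$ whose $i$-th coordinate is $1$ if unit $i$ is active (positive pre-activation) at $\vx$ and $0$ otherwise. There are at most $2^N$ such patterns, so it suffices to show that the number of linear regions is bounded by the number of realized patterns.

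The key step is to prove the following two facts for each pattern $\sigma \in \{0,1\}^N$: (i) on the set $U_\sigma := \{\vx : \sigma(\vx) = \sigma\}$, the function $F$ is affine; and (ii) $U_\sigma$ is convex (hence connected). For (i), I would argue by induction on the layer index: once the activation pattern of layers $1,\ldots,l-1$ is fixed, the map $\vx \mapsto \vx_{l-1}$ is affine on $U_\sigma$, and fixing the activation pattern at layer $l$ replaces each rectifier by either the identity or zero on its pre-activation, which is itself affine in $\vx$. Composing gives that $F$ restricted to $U_\sigma$ is affine. For (ii), the same induction shows that the sign constraints defining ``unit $i$ active vs.\ inactive'' translate into affine inequalities in $\vx$ (since the pre-activation of unit $i$ is an affine function of $\vx$ on $U_\sigma$), so $U_\sigma$ is cut out by at most $N$ affine inequalities and is therefore a convex polyhedron.

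With (i) and (ii) in hand, every non-empty $U_\sigma$ is contained in a single linear region of $F$ (being connected and having $F$ affine on it). Conversely, every point of $\R^{n_0}$ belongs to some $U_\sigma$, so the collection $\{U_\sigma : \sigma \in \{0,1\}^N\}$ covers the input space, and in particular covers every linear region. Each linear region therefore contains at least one of these sets, and distinct linear regions correspond to disjoint patterns; hence the number of linear regions is at most the number of non-empty $U_\sigma$'s, which is at most $2^N$.

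The main obstacle is the connectedness claim (ii): without it, a single activation pattern could a priori give rise to several disconnected regions on which $F$ is affine, and the naive count $2^N$ would only bound the number of \emph{distinct affine pieces}, not the number of maximal connected regions. The inductive argument above handles this by showing that the constraints defining $U_\sigma$ remain linear in $\vx$ after composition through the earlier layers, which is the specific feature of rectifier units that makes the bound clean. Everything else—the measure-zero behavior on the boundaries where some pre-activation vanishes, and the fact that different patterns may produce the same affine function and thus merge into a single larger linear region—only makes the bound tighter.
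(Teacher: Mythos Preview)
Your proposal is correct and follows exactly the idea the paper invokes: the paper states (without further detail) that the bound ``follows directly from the fact that each linear region of a rectifier network corresponds to a pattern of hidden units being active,'' and your argument is precisely the fleshed-out version of that sentence, including the convexity of each activation-pattern cell that makes the count go through.
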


\subsection{Illustration of the Construction}
\label{sec:illustration}

\begin{figure}[t]
    \centering
    \includegraphics[width=0.4\columnwidth]{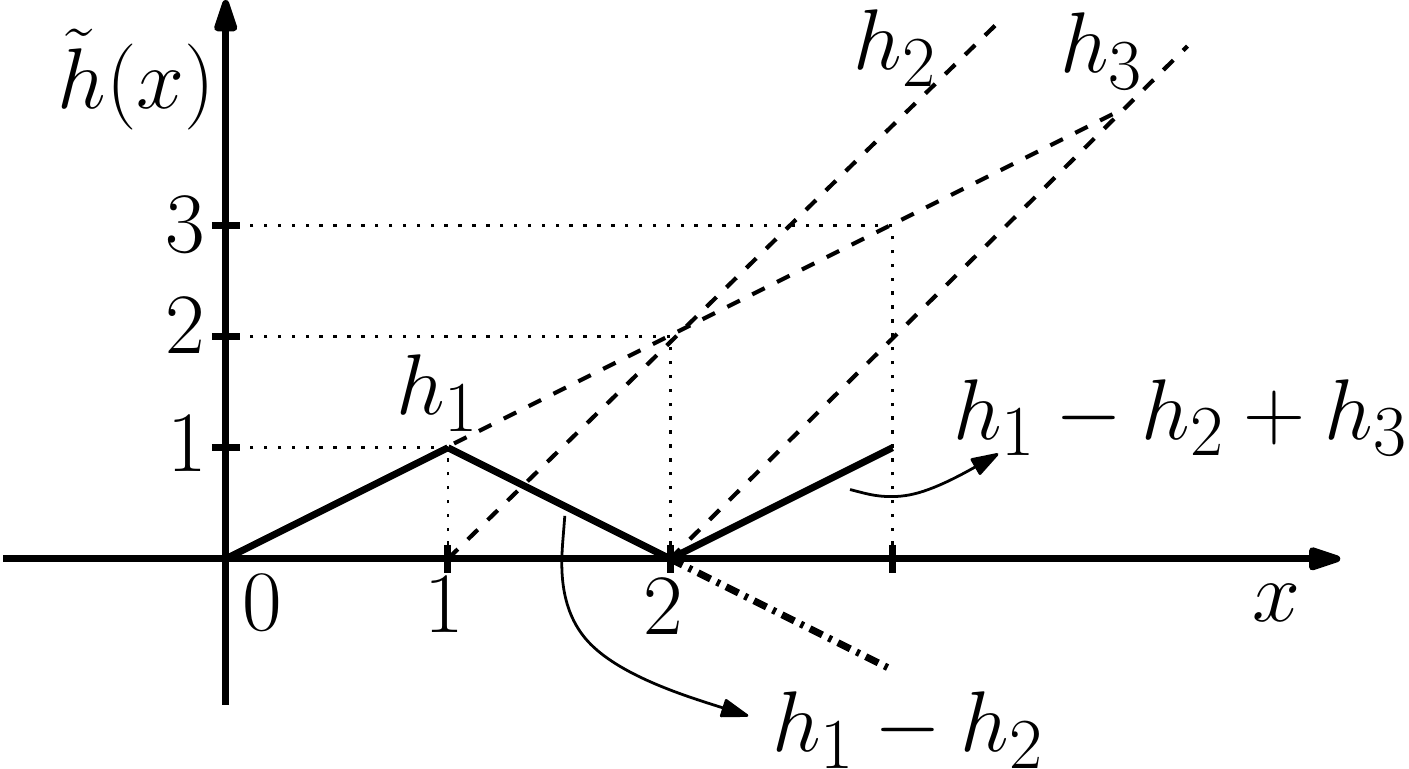}
    \caption{Folding of the real line into equal-length segments
        by a sum of rectifiers.}
    \label{fig:simple}
\vspace{-3mm}
\end{figure}

Consider a layer of $n$ rectifiers with $n_0$ input variables, where $n \geq n_0$.  
We partition the set of rectifier units into $n_0$ (non-overlapping) subsets of cardinality $p =
\left\lfloor\slantfrac{n}{n_0}\right\rfloor$ and ignore the remainder units. 
Consider the units in the $j$-th subset. 
We can choose their input weights and biases such that
\begin{eqnarray*}
    h_1 (\vx)\!\! &=& \!\!\max\left\{0, \phantom{2}\vw \vx \right\}, \\
    h_2 (\vx)\!\! &=& \!\!\max\left\{0, 2 \vw \vx  - 1 \right\}, \\
    h_3 (\vx)\!\! &=& \!\!\max\left\{0, 2 \vw \vx  - 2 \right\}, \\
    		  &\vdots& \\
    h_p (\vx)\!\! &=& \!\!\max\left\{0, 2 \vw \vx  - (p - 1)\right\},
\end{eqnarray*}
where $\vw$ is a row vector with $j$-th entry equal to $1$ and all other
entries set to $0$.  The product $\vw \vx$ selects the $j$-th coordinate of
$\vx$.
Adding these rectifiers with alternating signs, we obtain following scalar function: 
\begin{align}
    \label{eq:comb2}
    \tilde h_j(\vx) = 
    \left[ 1,  -1 , 1 , \ldots, (-1)^{p-1}  \right]
    \left[ h_1 (\vx) , h_2 (\vx) , h_3 (\vx), \ldots, h_p(\vx)\right]^\top. 
\end{align}
Since $\tilde h_j$ acts only on the $j$-th input coordinate, we
may redefine it to take a scalar input, namely the $j$-th coordinate of $\vx$.
This function has $p$ linear regions given by the following intervals: 
\begin{align*}
\left(-\infty, 0\right] , %\cup
        \left[0, 1 \right] , %\cup 
        \left[1, 2 \right] , %\cup 
        \ldots , %\cup 
        \left[p-1, \infty \right).
\end{align*}
Each of these intervals has a subset that is mapped by $\tilde h_j$ onto the
interval $\left( 0,1\right)$, as illustrated in Fig.~\ref{fig:simple}.  The function
$\tilde h_j$ identifies the input-space strips with $j$-th coordinate $\vx_j$
restricted to the intervals $(0,1), (1,2), \ldots, (p-1,p)$. 
Consider now all the $n_0$ subsets of rectifiers and the function $\tilde h =
\big[ \tilde h_1, \tilde h_2, \ldots, \tilde h_p \big]^\top$.  This function is
locally symmetric about each hyperplane 
with a fixed $j$-th coordinate equal to 
$\vx_j = 1,\ldots, \vx_j=p-1$ (vertical lines in Fig.~\ref{fig:simple}), for all $j=1,\ldots, n_0$. 
Note the periodic pattern that emerges. 
In fact, the function $\tilde h$ identifies a total of $p^{n_0}$ hypercubes
delimited by these hyperplanes. 

Now, note that $\tilde h$ arises from $h$ by composition with a linear function (alternating sums). 
This linear function can be effectively absorbed in the pre-activation function of the next layer. 
Hence we can treat $\tilde h$ as being the function computed by the current layer. 
Computations by deeper layers, as functions of the unit hypercube output of this rectifier layer, 
are replicated on each of the $p^{n_0}$ identified input-space hypercubes.

\subsection{Formal Result}

We can generalize the construction described above to the case of a deep
rectifier network with  $n_0$ inputs and $L$ hidden layers of widths $n_i\geq
n_0$ for all $i\in\left[L\right]$.  We obtain the following lower bound for the
maximal number of linear regions of deep rectifier networks: 

\begin{theorem} 
    \label{thm:opt2}
    The maximal number of linear regions of the functions computed by a
    neural network with $n_0$ input units and $L$ hidden layers, with $n_i\geq
    n_0$ rectifiers at the $i$-th layer, is lower bounded by 
    \[
    \left( \prod_{i=1}^{L-1} \left\lfloor \frac{n_i}{n_0}
            \right\rfloor^{n_0} \right)
            \sum_{j=0}^{n_0} { n_L \choose j }. 
    \]
\end{theorem}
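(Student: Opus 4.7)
The plan is to instantiate the construction of Section~\ref{sec:illustration} in each of the first $L-1$ hidden layers and then invoke Zaslavsky's bound on the last layer, combining the contributions via Lemma~\ref{thm:gen1}. Set $p_i := \lfloor n_i/n_0 \rfloor$. For each layer $i \in [L-1]$ I would partition its $n_i$ rectifiers into $n_0$ groups of $p_i$ units (discarding the remainder) and assign the weights and biases exactly as in Eqs.~(above Eq.~\ref{eq:comb2}) so that, after absorbing the alternating-sign linear combination into the preactivation $f_{i+1}$ of the next layer, the effective function computed by layer $i$ is an $n_0$-dimensional map $\tilde h^{(i)}$ whose $j$-th coordinate $\tilde h^{(i)}_j$ folds the $j$-th coordinate of its input into $p_i$ equal-length pieces, as in Fig.~\ref{fig:simple}.

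For the chaining to work I need the effective input to layer $i+1$ to look like the standard $n_0$-dimensional setup assumed in Section~\ref{sec:illustration}. This is handled by the absorption trick: the linear combination $[1,-1,\ldots,(-1)^{p_i-1}]$ on each group is folded into the weight matrix $\mW_{i+1}$, so layer $i+1$ sees an effective $n_0$-dimensional input lying in the unit hypercube $[0,1]^{n_0}$, which is precisely the image of $\tilde h^{(i)}$. Consequently, each $\tilde h^{(i)}$ identifies $p_i^{n_0}$ axis-aligned hypercubes in its input space to the single output hypercube $[0,1]^{n_0}$. Applying Eq.~\eqref{eq:ncopies} recursively to the composition $F_{L-1} = h_{L-1}\circ\cdots\circ h_1$, any neighborhood $R \subseteq [0,1]^{n_0}$ reachable at layer $L-1$ has at least $\NN_R^{L-1} \geq \prod_{i=1}^{L-1} p_i^{n_0}$ preimage neighborhoods in $\R^{n_0}$.

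For the final hidden layer I choose the $n_L$ rectifier hyperplanes in $\R^{n_0}$ to be in general position and to intersect the unit hypercube $[0,1]^{n_0}$ generically, so that all $\sum_{j=0}^{n_0} \binom{n_L}{j}$ regions of the induced hyperplane arrangement have a nonempty intersection with $[0,1]^{n_0}$. This is possible because one can translate and rescale the construction of Section~\ref{sec:single} so that the arrangement lives entirely inside the unit hypercube. Each of these regions then provides a distinct linear piece of $h_L$ inside the image of $F_{L-1}$, and Lemma~\ref{thm:gen1} furnishes the lower bound
\[
\NN \;\geq\; \sum_{R \in P^L} \NN_R^{L-1} \;\geq\; \left(\prod_{i=1}^{L-1} p_i^{n_0}\right)\sum_{j=0}^{n_0}\binom{n_L}{j},
\]
which is the claimed bound.

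The main obstacle I anticipate is the bookkeeping for the chaining step: verifying rigorously that the $p_i^{n_0}$ preimage hypercubes produced at layer $i$ survive as distinct regions under the further foldings of layers $i+1,\ldots,L-1$, i.e., that the recursion~\eqref{eq:ncopies} actually multiplies the counts rather than collapsing them. This reduces to checking that on each identified subhypercube the composition $h_{L-1}\circ\cdots\circ h_{i+1}$ acts as the same affine map as on the canonical copy $[0,1]^{n_0}$, which follows from the symmetry of the coordinate-wise folds $\tilde h^{(i)}_j$ about the hyperplanes $\{x_j = k\}$. A subsidiary nuisance is ensuring that the last-layer hyperplanes can be simultaneously placed in general position \emph{and} confined to the interior of the unit hypercube, which is a routine perturbation argument.
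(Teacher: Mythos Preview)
Your proposal is correct and follows essentially the same route as the paper's proof: partition each of the first $L-1$ layers into $n_0$ coordinate-wise groups, use the alternating-sum fold from Section~\ref{sec:illustration} (absorbed into the next preactivation) to identify $\prod_{i=1}^{L-1} p_i^{n_0}$ hypercubes with the unit cube, and finish with a generic hyperplane arrangement in the last layer via Zaslavsky's count and Lemma~\ref{thm:gen1}. The paper additionally carries out the explicit verification that $\tilde h$ maps each interval $(k-1,k)$ onto $(0,1)$ and observes that distributing the remainder units among the groups yields a slightly sharper constant, but neither point affects the argument you outlined.
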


The next corollary gives an expression for the asymptotic behavior of these bounds. 
Assuming that $n_0 = O(1)$ and $n_i = n$ for all $i \geq 1$, the number of
regions of a single layer model with $Ln$ hidden units behaves as
$O(L^{n_0}n^{n_0})$~\citep[see][Proposition~10]{Pascanu2014}.  For a deep
model, Theorem~\ref{thm:opt2} implies: 

\begin{corollary}
    \label{remark:asymptotic_mlp}
A rectifier neural network with $n_0$ input units and $L$ hidden layers of
width $n \geq n_0$ can compute functions that have 
$\Omega\left(\left(\slantfrac{n}{n_0}\right)^{(L-1)n_0} n^{n_0}\right)$ linear regions. 
\end{corollary}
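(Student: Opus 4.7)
The plan is simply to specialize the explicit lower bound of Theorem~\ref{thm:opt2} to the uniform-width case $n_i=n$ and then read off the asymptotic order. Substituting $n_i=n$ for $i=1,\ldots,L$ into Theorem~\ref{thm:opt2} gives the bound
\[
\left\lfloor \frac{n}{n_0} \right\rfloor^{(L-1)n_0} \cdot \sum_{j=0}^{n_0} \binom{n}{j}.
\]
The whole task is to show that this quantity is $\Omega\!\left((n/n_0)^{(L-1)n_0}\, n^{n_0}\right)$ under the convention $n_0=O(1)$.

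First I would handle the floor. Because $n\ge n_0$, we have $\lfloor n/n_0\rfloor \ge n/n_0 - 1$, and since $n/n_0\ge 1$ one obtains $\lfloor n/n_0\rfloor \ge \tfrac{1}{2}(n/n_0)$ whenever $n\ge 2n_0$, while for the finitely many smaller values of $n$ the bound holds up to a constant depending only on $n_0$. Raising to the power $(L-1)n_0$ then gives
\[
\left\lfloor \frac{n}{n_0} \right\rfloor^{(L-1)n_0} \;=\; \Omega\!\left( \left(\frac{n}{n_0}\right)^{(L-1)n_0} \right),
\]
with a hidden constant that depends only on $n_0$, which is $O(1)$ by hypothesis.

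Second I would handle the binomial sum. Keeping only the top term,
\[
\sum_{j=0}^{n_0} \binom{n}{j} \;\ge\; \binom{n}{n_0} \;=\; \frac{n(n-1)\cdots(n-n_0+1)}{n_0!} \;=\; \Theta(n^{n_0}),
\]
again with a constant depending only on $n_0$. Multiplying the two lower bounds together yields
\[
\left\lfloor \frac{n}{n_0} \right\rfloor^{(L-1)n_0} \sum_{j=0}^{n_0} \binom{n}{j} \;=\; \Omega\!\left( \left(\frac{n}{n_0}\right)^{(L-1)n_0} n^{n_0} \right),
\]
which is exactly the claim.

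There is no real obstacle here; the content is entirely in Theorem~\ref{thm:opt2}. The only minor care point is to make sure the floor is not degenerate, i.e.\ that $n/n_0 \ge 1$ so that $\lfloor n/n_0\rfloor^{(L-1)n_0}$ is genuinely of order $(n/n_0)^{(L-1)n_0}$ rather than being killed by the floor; this is guaranteed by the hypothesis $n\ge n_0$ together with $n_0 = O(1)$, which absorbs any constant loss into the $\Omega$.
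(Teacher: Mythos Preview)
Your approach is exactly the paper's: the corollary is stated there as an immediate consequence of Theorem~\ref{thm:opt2} with $n_i=n$ and $n_0=O(1)$, with no separate argument given, so your specialization and the two estimates (floor, binomial sum) are more than the paper itself writes down.

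One small slip worth flagging: after bounding $\lfloor n/n_0\rfloor \ge \tfrac12(n/n_0)$ you raise both sides to the power $(L-1)n_0$ and assert that the hidden constant ``depends only on $n_0$.'' It does not: the resulting factor is $(1/2)^{(L-1)n_0}$, which depends on $L$. So if the $\Omega$ is meant uniformly in both $n$ and $L$, your floor estimate is too crude. The paper glosses over this as well, and for the intended reading --- $n_0$ fixed, asymptotics in $n$ (or $n_0\mid n$, or $L=O(n)$ so that $(1-n_0/n)^{(L-1)n_0}$ stays bounded away from zero) --- the statement is fine; but your explicit claim about the constant should be softened.
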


Thus we see that the number of linear regions of deep models grows
exponentially in $L$ and polynomially in $n$, which is much faster than that of
shallow models with $nL$ hidden units.  Our result is a significant improvement
over the bound  $\Omega\left( \left( \slantfrac{n}{n_0}\right)^{L-1}n^{n_0}\right)$
obtained by~\citet[]{Pascanu2014}.  In particular, our result demonstrates that
even for small values of $L$ and $n$, deep rectifier models are able to produce
substantially more linear regions than shallow rectifier models. 
Additionally, using the same strategy as \citet[]{Pascanu2014}, 
our result can be reformulated in terms of the number of \emph{linear
regions per parameter}.  This results in a similar behaviour, with
deep models being exponentially more efficient than shallow models
%(see the Supplementary Material).  
(see Appendix~\ref{sec:bound_params}).  
\section{Deep Maxout Networks}
\label{sec:maxout}

A maxout network is a feedforward network with layers defined as follows: 

\begin{definition}
    \label{def:maxout}
A {\em rank-$k$ maxout layer} with $n$ input and $m$ output units is defined by a pre-activation function of the form 
$f:\R^n\to \R^{m\cdot k};\,f(\mathbf{x}) = \mathbf{W} \mathbf{x} + \mathbf{b}$,  
with input and bias weights $\mathbf{W}\in\R^{m\cdot k \times n}$, $\mathbf{b}\in\R^{m \cdot k}$, 
and activations of the form 
$g_j(\mathbf{z}) = \max\{ \mathbf{z}_{(j-1)k+1}, \ldots, \mathbf{z}_{jk}\}$ for all $j\in[m]$. 
The  layer computes a function 
\begin{equation}
g\circ f\colon \quad 
\R^n\to\R^m; 
\quad
\vx \mapsto 
\begin{pmatrix}
\max\{f_1(\vx), \ldots, f_k(\vx)  \} \\ \vdots \\ 
\max\{f_{(m-1) k +1}(\vx), \ldots, f_{m k}(\vx)  \}
\end{pmatrix}. 
\end{equation}

\end{definition}

Since the maximum of two convex functions is convex,  
maxout units 
and maxout layers 
compute convex functions. 
The maximum of a collection of functions is called their {\em upper envelope}. 
We can view the graph of each linear function $f_i \colon \R^n\to \R$ as a supporting hyperplane of a convex set in
$(n+1)$-dimensional space. 
In particular, if each $f_i$, $i\in [k]$ is the unique maximizer 
$f_i = \max\{f_i'\colon i'\in[k] \}$ at some input neighborhood,  
then the number of linear regions of the upper envelope $g_1\circ f =\max\{f_i\colon i\in[k] \}$ is exactly $k$. 
This shows that the maximal number of linear regions of a maxout unit is equal to its rank. 

The linear regions of the maxout layer 
are the intersections of the linear regions of the individual maxout units. 
In order to obtain the number of linear regions for the layer, 
we need to describe the structure of the linear regions of each maxout unit, and study their possible intersections. 

Voronoi diagrams can be lifted to upper envelopes of linear functions, and hence they describe input-space partitions generated by maxout units. 
Now, how many regions do we obtain by intersecting the regions of $m$ Voronoi diagrams with $k$ regions each? 
Computing the intersections of Voronoi diagrams is not easy, in general. 
A trivial upper bound for the number of linear regions is $k^m$, which corresponds to the case where all intersections of regions of different units are different from each other. We will give a better bound in Proposition~\ref{prp:maxout}. 

Now, for the purpose of computing lower bounds, here it will be sufficient to consider certain well-behaved special cases. 
One simple example is 
the division of input-space by 
$k-1$ parallel hyperplanes. 
If $m\leq n$, we can consider the arrangement of hyperplanes $H_i= \{\vx \in\R^n \colon \vx_j=i\}$ for $i=1,\ldots, k-1$, for each
maxout unit $j\in [m]$. 
In this case, the number of regions is $k^m$. 
If $m> n$, the same arguments yield $k^n$ regions. 

\begin{proposition}
    \label{prp:maxout}
The maximal number of regions of a single layer maxout network
with $n$ inputs and $m$ outputs of rank $k$ is lower bounded by
$k^{\min\left\{n,m\right\}}$ and upper bounded by 
$\min \{  \sum_{j=0}^{n}{ k^2 m\choose j} , k^m \}$. 
\end{proposition}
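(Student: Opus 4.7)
The plan is to prove the two bounds separately, then take min/max as stated.

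For the lower bound $k^{\min\{n,m\}}$, I would make the construction sketched just before the statement fully explicit. For a rank-$k$ unit acting along a single coordinate $x_j$, I set $f_{j,i}(\vx) = (i-1)x_j - \binom{i-1}{2}$ for $i=1,\ldots,k$. A direct calculation shows that the upper envelope $\max_i f_{j,i}$ changes its active index at $x_j = 0,1,\ldots,k-2$, so this unit partitions $\R^n$ into exactly $k$ parallel slabs orthogonal to $\vect{e}_j$. In the case $m \le n$, I assign unit $j$ to coordinate $j$ for $j=1,\ldots,m$; because each unit's partition depends only on a distinct coordinate, the common refinement is a product, contributing exactly $k^m$ regions. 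In the case $m > n$, I apply the same construction along the $n$ available axes for the first $n$ units and let the remaining units be arbitrary; the common refinement still contains the product partition with $k^n$ regions. Combining gives $k^{\min\{n,m\}}$.

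For the upper bound, I would handle the two terms in the $\min$ separately. The bound $k^m$ is immediate from the fact, established just before the statement, that each rank-$k$ maxout unit partitions $\R^n$ into at most $k$ convex pieces; the layer's partition is the common refinement of $m$ such partitions, hence has at most $k^m$ pieces. For the bound $\sum_{j=0}^n \binom{k^2 m}{j}$, I would argue that every boundary between two adjacent regions of the layer lies on some hyperplane of the form $\{\vx : f_{j,r}(\vx) = f_{j,s}(\vx)\}$ for some unit $j\in[m]$ and some pair of indices $r\neq s \in [k]$. The total number of such candidate hyperplanes is at most $m \binom{k}{2} \le k^2 m$. Every linear region of the layer is a union of regions in the arrangement formed by these hyperplanes, and Zaslavsky's theorem (already invoked in Section~\ref{sec:single}) bounds the number of regions of an arrangement of $H$ hyperplanes in $\R^n$ by $\sum_{j=0}^{n}\binom{H}{j}$. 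Monotonicity of $\binom{\cdot}{j}$ in the first argument then yields $\sum_{j=0}^n\binom{k^2m}{j}$.

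The main subtlety is the step in the Zaslavsky-style bound where I claim that every layer boundary is captured by the candidate-hyperplane arrangement. This requires noting that on any two adjacent linear regions of the layer, at least one coordinate of the output $g\circ f$ is given by different linear functions on the two sides; since each such coordinate is an upper envelope of the $f_{j,i}$'s, the switch of active index must occur along an equality $f_{j,r}=f_{j,s}$. Once this is observed, the layer's region partition is a coarsening of the partition induced by the hyperplane arrangement, so Zaslavsky's bound transfers. The remaining work is just verifying the counts $\binom{k}{2} \le k^2$ and combining the two upper bounds via $\min$.
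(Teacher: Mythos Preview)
Your proposal is correct and follows essentially the same approach as the paper. The paper's proof in the appendix treats only the Zaslavsky-based upper bound (counting at most $k^2$ equality hyperplanes per unit and applying Zaslavsky to the resulting arrangement of $k^2 m$ hyperplanes), while the lower bound via parallel-slab units along coordinate axes and the trivial $k^m$ upper bound are handled in the main text exactly as you describe; your version simply makes the constructions and the coarsening argument more explicit.
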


Now we take a look at the deep maxout model. 
Note that a rank-$2$ maxout layer can be simulated by a rectifier layer with twice as many units.  Then, by the
results from the last section, a rank-$2$ maxout network with $L-1$ hidden
layers of width $n=n_0$ can identify $2^{n_0 (L-1) }$ input-space regions, and,
in turn, it can compute functions with $2^{n_0(L-1)} 2^{n_0} = 2^{n_0 L}$
linear regions. 

For the rank-$k$ case, we note that a rank-$k$ maxout unit can identify $k$
cones from its input-domain, whereby each cone is a neighborhood of the
positive half-ray $\{r \mW_i \in\R^n \colon r\in\R_+ \}$ corresponding to the
gradient $\mW_i$ of the linear function $f_i$ for all $i\in[k]$.  Elaborating
this observation, we obtain: 

\begin{theorem}
    \label{thm:maxout}
A maxout network with $L$ layers of width $n_0$ and rank $k$ can compute
functions with at least $k^{L-1} k^{n_0}$ linear regions. 
\end{theorem}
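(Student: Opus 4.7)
The plan is to invoke Lemma~\ref{thm:gen1} with a carefully chosen network. For the last hidden layer $h_L$, I would use the lower-bound construction of Proposition~\ref{prp:maxout} with $n=m=n_0$: for each unit $j\in[n_0]$ take the $k-1$ parallel hyperplanes $\{\vx : x_j=1\},\ldots,\{\vx : x_j=k-1\}$, producing a grid of $k^{n_0}$ hypercube-shaped linear regions; picking one representative neighborhood in each gives $|P^L|\geq k^{n_0}$.

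For each $R\in P^L$, the goal is to show $\NN_R^{L-1}\geq k^{L-1}$. It suffices to build each of the first $L-1$ layers so that it identifies $k$ disjoint input neighborhoods onto one common image neighborhood of full dimension $n_0$; unrolling the recursion in Eq.~\eqref{eq:ncopies} then yields $k^{L-1}$. The building block is the observation stated immediately before the theorem: a rank-$k$ maxout unit with gradients $\mW_1,\ldots,\mW_k$ partitions its input domain into $k$ cones, one around each $\mW_i$, acting linearly on each. For $n_0\geq 2$, I would take the $k$ cones to be rotates of a fundamental wedge under an order-$k$ rotation $\rho$ supported in a fixed $2$-plane of $\R^{n_0}$, and choose the weights of the $n_0$ units of the layer so that on the $i$-th cone the layer computes $\rho^{-(i-1)}\vx$. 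Then all $k$ cones map bijectively onto the fundamental wedge.

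Composing $L-1$ such folding layers, with each subsequent layer set up to operate inside the fundamental image of the previous one, identifies $k^{L-1}$ disjoint input-space regions onto a common full-dimensional set $C^\star\subset\R^{n_0}$. Rescaling $h_L$ so that its grid of $k^{n_0}$ hypercubes lies inside $C^\star$ and applying Lemma~\ref{thm:gen1} yields at least $k^{n_0}\cdot k^{L-1}$ linear regions of the overall function.

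The main technical obstacle is realising the folding layer as a genuine rank-$k$ maxout for which (i) all $n_0$ units share the intended $k$-cone argmax partition and (ii) the layer simultaneously produces a full-rank linear map on each cone; these demands pull against each other, since forcing the units' argmax regions to coincide restricts the freedom in their weight vectors. A weaker fallback that still suffices for Lemma~\ref{thm:gen1} is to allow the layer's linear-region partition to be a proper refinement of the target $k$ cones and verify only that some $k$ of the refined regions are mapped to a common full-dimensional output set, which is all that the recursion in Eq.~\eqref{eq:ncopies} needs to pick up the required factor of $k$ per layer.
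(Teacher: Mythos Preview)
Your overall architecture matches the paper's: invoke Lemma~\ref{thm:gen1}, use the parallel-hyperplane construction underlying Proposition~\ref{prp:maxout} at the last layer for the factor $k^{n_0}$, and build each of the first $L-1$ layers so that it identifies $k$ full-dimensional input regions with a common full-dimensional image, contributing the factor $k^{L-1}$ via Eq.~\eqref{eq:ncopies}. (In fact you are more explicit than the paper about the role of the last layer; the paper's proof only spells out the $k^{L-1}$ part.)

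The gap is in your concrete folding layer. The map $\vx\mapsto\rho^{-(i-1)}\vx$ on the $i$-th wedge cannot be realised coordinatewise by maxout units. A maxout unit outputs the \emph{pointwise maximum} of its $k$ linear pieces, so the active piece on any region is the one with the largest value there. If unit~$1$'s pieces are the first rows of $\rho^{0},\ldots,\rho^{-(k-1)}$ and unit~$2$'s are the second rows, then for $\vx=r(\cos\phi,\sin\phi)$ the $i$-th piece of unit~$1$ equals $r\cos\!\big(\phi-(i-1)\tfrac{2\pi}{k}\big)$ while the $i$-th piece of unit~$2$ equals $r\sin\!\big(\phi-(i-1)\tfrac{2\pi}{k}\big)$. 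On your intended $i$-th wedge ($\phi$ near $(i-1)\tfrac{2\pi}{k}$) the second quantity is near zero and is \emph{not} the maximum over $i'$; unit~$2$'s argmax regions are rotated by a quarter turn relative to unit~$1$'s, and for $k\geq 5$ the ``matched'' intersections are empty. So unit~$2$ simply does not compute the second coordinate of $\rho^{-(i-1)}\vx$ there. Your fallback---accept a finer partition and exhibit $k$ pieces sharing a full-dimensional image---is the right relaxation, but you have not produced such pieces for these weights, and with this choice it is not clear they exist.

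The paper resolves exactly the tension you flag between (i) and (ii) with a different construction. For a base unit it takes the $k$ weight rows to be $\pm e_1,\ldots,\pm e_{\lfloor k/2\rfloor}$ (padding with zeros if $k>2n_0$), so the unit's $k$ linear regions are full-dimensional cones about these directions. The remaining $n_0-1$ units are obtained from this one by \emph{slight rotations in independent directions}. Because the rotations are small, for each $i\in[k]$ the intersection $R_i=\bigcap_{j} R_{j,i}$ of the units' $i$-th cones is still a full-dimensional cone; because the rotations are in independent directions, the $n_0$ gradient vectors on $R_i$ form a basis of $\R^{n_0}$, so the layer restricted to each $R_i$ is an invertible linear map and the images share a common open cone (after a bias shift, a neighbourhood of the origin). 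This perturbative device is the missing ingredient that your rotation-in-a-$2$-plane idea lacks.
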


Theorem~\ref{thm:maxout} and Proposition~\ref{prp:maxout} show that deep maxout
networks can compute functions with a number of linear regions that grows
exponentially with the number of layers, and exponentially faster than the maximal number of regions of shallow models with the same number of units. 
Similarly to the rectifier model, 
this exponential behavior can also be established with respect to the number of network parameters. 

We note that although certain functions that can be computed by maxout
layers can also be computed by rectifier layers, the rectifier
construction from last section leads to functions that are not computable
by maxout networks (except in the rank-$2$ case).  The proof of
Theorem~\ref{thm:maxout} is based on the same general arguments from
Sec.~\ref{sec:general}, but uses a different construction than
Theorem~\ref{thm:opt2} 
%(details in the Supplementary Material). 
(details in Appendix~\ref{sec:maxout_apdx}). 

\section{Conclusions and Outlook}

We studied the complexity of functions computable by deep feedforward neural
networks in terms of their number of linear regions. We specifically focused on
deep neural networks having piecewise linear hidden units which have been found
to provide superior performance in many machine learning applications recently.
We discussed the idea that each layer of a deep model is able to identify pieces
of its input in such a way that the composition of layers identifies an
exponential number of input regions. This results in exponentially replicating the
complexity of the functions computed in the higher layers of the model. The
functions computed in this way by deep models are complicated, but still they have an intrinsic rigidity 
caused by the replications, which may help deep models generalize to unseen samples better than shallow models. 

This framework is applicable to any neural network that has a piecewise linear
activation function. For example, if we consider a convolutional network with
rectifier units, as the one used in \citep{Krizhevsky-2012}, we can see that the
convolution followed by max pooling at each layer identifies all patches of the
input within a pooling region. This will let such a deep convolutional neural
network recursively identify patches of the images of lower layers, resulting in
exponentially many linear regions of the input space. 

The parameter space of a given network is partitioned into the regions where the
resulting functions have corresponding linear regions. This
correspondence of the linear regions of the computed functions can be
described in terms of their adjacency structure, or a poset of intersections of
regions.  Such combinatorial structures are in general hard to compute, even for
simple hyperplane arrangements. One interesting question for future analysis is
whether many regions of the parameter space of a given network correspond to
functions which have a given number of linear regions. 

%\subsubsection*{Acknowledgments}

\subsubsection*{References}
\pagestyle{empty}
 \begingroup
  \def\section*#1{}
\bibliographystyle{abbrvnat}
\bibliography{myref,strings,ml}
\endgroup

\newpage
\appendix
\section{Identification of Input-Space Neighborhoods}

\begin{proof}[Proof of Lemma~\ref{thm:gen1}]

Each output-space neighborhood $R\in P^L$ has as preimages all input-space neighborhoods that are $R$-identified by $\eta_L$ (i.e., the input-space neighborhoods whose image by $\eta_L$, the function computed by the first $L$-layers of the network, equals $R$). 
The number of input-space preimages of $R$ is denoted $\NN_R^{L}$. 
If each $R\in P^L$ is the image of a distinct linear region of the function $h_L = g_L\circ f_L$ computed by the last layer, 
then, by continuity, all preimages of all different $R\in P^L$ belong to different linear regions of $\eta_L$. 
Therefore, the number of linear regions of functions computed by the entire network is at least equal to the sum of the number of preimages of all $R\in P^L$, which is just $\NN = \sum_{R \in P^L} \NN_R^{L-1}$.  
\end{proof}

\section{Rectifier Networks}

\begin{proof}[Proof of Theorem~\ref{thm:opt2}]
The proof is done by counting the number of regions for a suitable choice of network parameters. 
The idea of the construction is to divide the first $L-1$ layers of the network into $n_0$ independent parts; 
one part for each input neuron. 
The parameters of each part are chosen in such a way that it folds its one-dimensional input-space many times into itself. 
For each part, the number of foldings per layer is equal to the number of units per layer. See Fig.~\ref{figure:maxout}. 

As outlined above, we organize the units of each layer into $n_0$ non-empty groups of units of sizes $p_1,\ldots, p_{n_0}$. 
A simple choice of these sizes is, for example, $p_1=\cdots= p_{n_0}= \left\lfloor n_l / n_0  \right\rfloor$ for a layer of width $n_l$, dropping the remainder units. 
We define the input weights of each group in such a way that the units in that group are sensitive to only one coordinate 
of the $n_0$-dimensional input-space. 
By the discussion from Sec.~\ref{sec:illustration}, choosing the input and bias weights in the right way, 
the alternating sum of the activations of the $p$ units within one group 
folds their input-space coordinate $p$ times into itself. 
Since the alternating sum of activations is an affine map, it can be absorbed in the pre-activation function of the next layer. 
In order to make the arguments more transparent, we view the alternating sum $\tilde h = h_1 - h_2 + \cdots \pm h_p $ 
of the activations of the $p$ units in a group as the activation of a fictitious intermediary unit. 
The compound output of all these $n_0$ intermediary units partitions the input-space, $\R^{n_0}$, 
into a grid of $\prod_{i=1}^{n_0} p_i$ identified regions. 
Each of these input-space regions is mapped to the $n_0$-dimensional unit cube in the output-space of the intermediary layer. 
We view this unit cube as the {\em effective} inputs for the next hidden layer, and repeat the construction. 
In this way, with each layer of the network, the number of identified regions is multiplied by $\prod_{i=1}^{n_0}p_i$, according to Lemma~\ref{thm:gen1}. 
In the following we discuss the folding details explicitly. 

Consider one of the network parts and consider the weights used in Sec.~\ref{sec:illustration}. 
The function $\tilde h$ computes an alternating sum of the responses $h_k$ for $k\in[p]$.  
It is sufficient to show that this sum folds the input-coordinate $p$ times into the interval $\left(0,1\right)$. 
Inspecting the values of $h_k$, we see that we only need to explore the intervals $(0,1), (1, 2), \ldots, (p-1, p)$. 

Consider one of these intervals, $(k-1,k)$, for some $k\in[p]$. 
Then, for all $x\in (k-1,k)$, 
we have $\tilde h (x) = x + 2 \sum_{i=1}^{k-1} (-1)^i (x-i) = (-1)^{k-1} (x-(k-1)) - \frac{1}{2}((-1)^{k-1} -1) $.  
Hence $\tilde h(k-1) = -\frac{1}{2}((-1)^{k-1} -1)$ and $\tilde h(k) = (-1)^{k-1} - \frac{1}{2}((-1)^{k-1} -1)$. 
One of the two values is always zero and the other one, and so  $\tilde h(\{k-1,k\}) = \{0,1\}$. 
Since the function is linear between $k-1$ and $k$, we obtain that $\tilde h$ maps the interval $(k-1,k)$ to the interval $(0,1)$.

In total, the number of input-space neighborhoods that are mapped by the first $L-1$ layers onto the (open) unit hypercube $(0,1)^{n_0}$ of the  (effective) output space of the $(L-1)$-th layer is given by 
\begin{equation}
\mathcal{N}^{L-1}_{(0,1)^{n_0}} = \prod_{l=1}^{L-1} \prod_{i=1}^{n_0} p_{l,i},  
\end{equation}
where $p_{l,i}$ is the number of units in the $i$-th group of units in the $l$-th layer. 

The inputs and bias of the last hidden layer can be chosen in such a way that the function $h_L$ partitions its (effective) input neighborhood 
$(0,1)^{n_0}$ by an arrangement of $n_L$ hyperplanes in general position, i.e., into $\sum_{j=0}^{n_0}{n_L\choose j}$ regions (see Sec.~\ref{sec:single}). 

Let $m_l$ denote the remainder of $n_l$ divided by $n_0$.  
Choosing $p_{l,1}=\cdots=p_{l,n_0-m_l} = \left\lfloor  {n_l} / {n_0}\right\rfloor$ 
and $p_{l,n_0-m_l+1}= \cdots = p_{l,n_0} = \left\lfloor  {n_l} / {n_0}\right\rfloor +1$, 
we obtain a total of linear regions 
\begin{equation}
\left( \prod_{l=1}^{L-1} \left\lfloor \frac{n_l}{n_0}\right\rfloor^{n_0 - m_l} \left( \left\lfloor \frac{n_l}{n_0}\right\rfloor +1\right)^{m_l} \right) \; \sum_{j=0}^{n_0}{n_L\choose j}.   
\end{equation}
This is equal to the bound given in theorem when all remainders $m_l= n_l - n_0 \lfloor n_l /n_0\rfloor$ are zero, and otherwise it is larger. 
This completes the proof. 
\end{proof}

\begin{figure} 
\centering
\includegraphics[clip= true, trim=7.2cm 19.6cm 6.8cm 2.7cm, width=.55\textwidth]{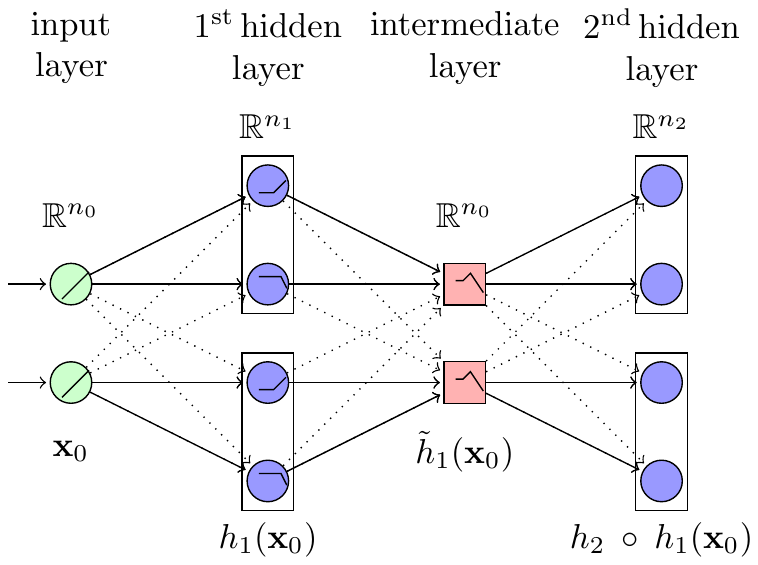} 
\includegraphics[width=.44\textwidth]{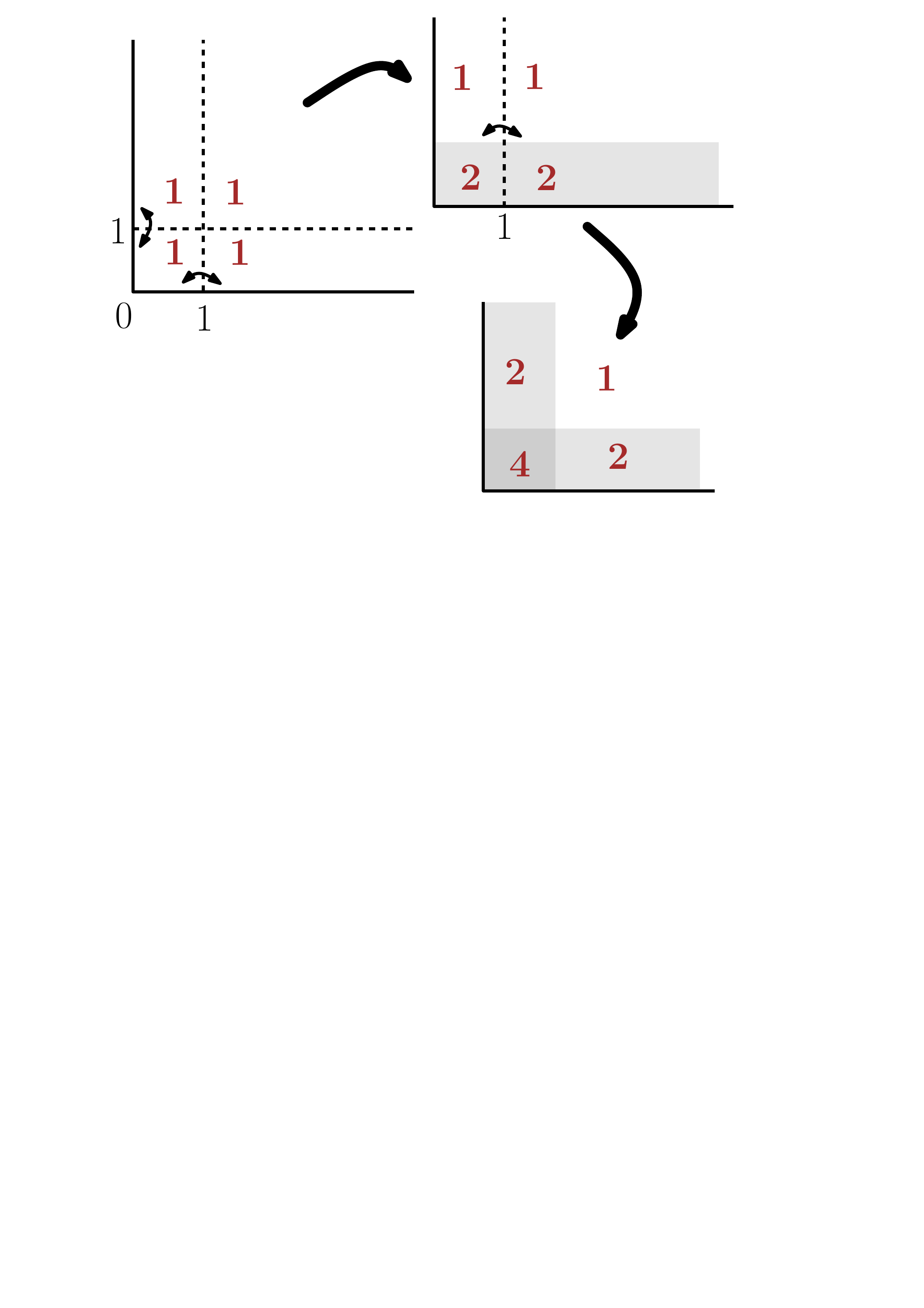} 
\caption{
Illustration of the proof of Theorem~\ref{thm:maxout}. 
The figure shows a rectifier network that has been divided into $n_0$ independent parts. Each part is sensitive only to one coordinate of the input-space. 
Each layer of each part is fed to a fictitious intermediary affine unit (computed by the preactivation function of the next layer), 
which computes the activation value that is passed to the next layer. 
Illustration of a function computed by the depicted rectifier network for $n_0=2$, at the intermediary layer. 
The function is composed of two foldings; the first pair of hidden units fold the input-space $\R^2$ along a line parallel to the x-axis, and the second pair, along a line parallel to the y-axis. 
}
\label{figure:maxout}
\end{figure}

\section{Our Bounds in terms of Parameters}
\label{sec:bound_params}

We computed bounds for the maximal number of linear regions of the functions computable by different networks in terms of their number of hidden units. 
It is not difficult to express these results in terms of the number of parameters of the networks and to derive expressions for the asymptotic rate of growth of the number of linear regions per added parameter. 
This kind of expansions have been computed in~\citet[][Proposition~8]{Pascanu2014}. 
The number of parameters of a deep model with $L$ layers of width $n$ behaves as $\Theta(L n^2)$, i.e., it is bounded above and below by $L n^2$, asymptotically. The number of parameters of a shallow model with $Ln$ hidden units behaves as $\Theta(L n)$. 
Our Theorem~\ref{thm:opt2} and the discussion of shallow networks given in Sec.~\ref{sec:single}, 
imply the following asymptotic rates (number of linear regions per parameter): 
\begin{itemize}
\item For a  deep model: $\Omega\left(\left(\slantfrac{n}{n_0}\right)^{n_0 (L-1)}\frac{n^{n_0-2}}{L}\right)$. 
\item For a shallow model: $O\left(L^{n_0-1} n^{n_0-1}\right)$. 
\end{itemize}
This shows that, for deep models, the maximal number of linear regions grows exponentially 
fast with the number of parameters, whereas, for shallow models, it grows only polynomially fast with the number of parameters.

\section{Maxout Networks}
\label{sec:maxout_apdx}

\begin{proof}[Proof of Proposition~\ref{prp:maxout}]
Here we investigate the maximal number of linear regions of a rank-$k$ maxout layer with $n$ inputs and $m$ outputs. 
In the case of rectifier units, the solutions is simply the maximal number of regions of a hyperplane arrangement. 
In the case of maxout units, we do not have hyperplane arrangements. 
However, we can upper bound the number of linear regions of a maxout layer by the number of regions of a hyperplane arrangement. 
The arguments are as follows. 

As mentioned in Sec.~\ref{sec:maxout}, each maxout unit divides its input into the linear regions of an upper envelope of $k$ real valued linear functions. 
In other words, the input space is divided by pieces of hyperplanes defining the boundaries between inputs where one entry of the pre-activation vector is larger than another. 
There are at most $k^2$ such boundaries, since each of them corresponds to the solution set of an equation of the form $f_i(\vx) = f_j(\vx)$. 
If we extend each such boundary to a hyperplane, then the number of regions can only go up. 

The linear regions of the layer are given by the intersections of the regions of the individual units. 
Hence, the number of linear regions of the layer is upper bounded (very loosely) by the number of regions of an arrangement of $k^2 \cdot m$ hyperplanes in $n$-dimensional space. 
By~\citet{zaslavsky1975facing}, the latter is $\sum_{j=0}^{n}{ k^2 m\choose j}$, 
which behaves as $O((k^2 m)^n)$, i.e., polynomially in $k$ and in $m$. 
\end{proof}

\begin{proof}[Proof of Theorem~\ref{thm:maxout}]
Consider a network with $n=n_0$ maxout units of rank $k$ in each layer. See Fig.~\ref{figure:maxout}. 
We define the seeds of the maxout unit $q_j$ such that $\{ \mW_{i,:}\}_i$ 
are unit vectors pointing in the positive and negative direction of $\lfloor k/2\rfloor$ coordinate vectors. 
If $k$ is larger than $2n_0$, then we forget about $k-2n_0$ of them (just choose $\mW_{i,:}=0$ for $i>2n_0$). 
In this case, $q_j$ is symmetric about the coordinate hyperplanes with normals $e_i$ with $i\leq \lfloor k/2\rfloor$ and has one linear region for each such $i$, 
with gradient $e_i$. 
For the remaining $q_j$ we consider similar functions, whereby we change the coordinate system by a slight rotation in some independent direction. 

This implies that the output of each $q_j\circ (f_1,\ldots, f_k)$ is an interval $[0,\infty)$. 
The linear regions of each such composition divide the input space into $r$ regions $R_{j,1},\ldots, R_{j,k}$. 
Since the change of coordinates used for each of them is a slight rotation in independent directions, 
we have that $R_i : =\cap_j R_{j,i}$ is a cone of dimension $n_0$ for all $i\in[k]$. 
Furthermore, the gradients of $q_j\circ f_j$ for $j\in[n_0]$ on each $R_i$ are a basis of $\R^{n_0}$. 
Hence the image of each $R_i$ by the maxout layer contains an open cone of $\R^{n_0}$ which is identical for all $i\in[k]$. 
This image can be shifted by bias terms such that the effective input of the next layer contains an open neighbourhood of the origin of $\R^{n_0}$. 

The above arguments show that a maxout layer of width $n_0$ and rank $k$ can identify at least $k$ regions of its input. 
A network with $L-1$ layers with therefore identify $k^{L-1}$ regions of the input.  
\end{proof}

\begin{figure}
\begin{center}
\includegraphics[clip=true, trim=7.2cm 19.6cm 6.8cm 2.7cm, width=.55\textwidth]{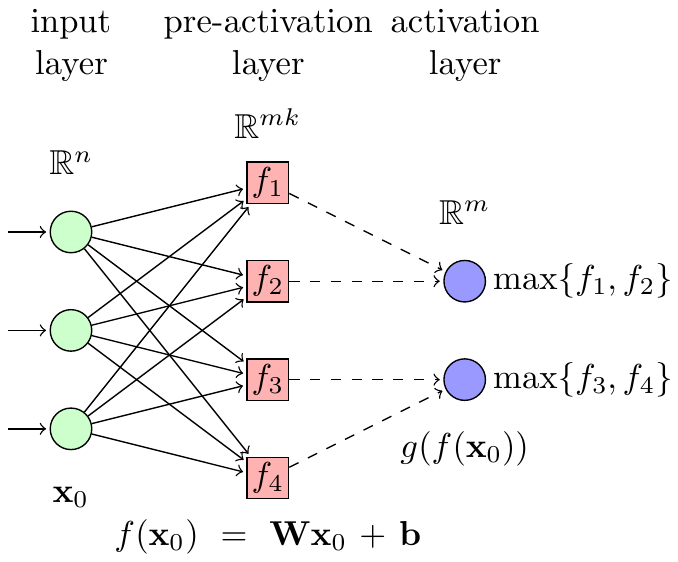}
\includegraphics[clip=true, trim=1.5cm 18.6cm 13cm 3.7cm, width=.44\textwidth]{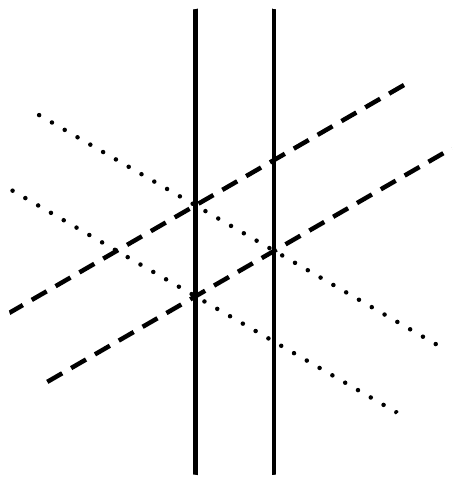}
\end{center}
\caption{Illustration of a rank-$2$ maxout layer with $n=3$ inputs and $m=2$ outputs. 
The preactivation function maps the input into $m k$-dimensional space, where $k=2$ is the rank of the layer. 
The activation function maximizes over groups of $k$ preactivation values.  
Illustration of the $3$-dimensional Shi arrangement $\mathcal{S}_3\subset \R^3$ 
(depicted is the intersection with $\{\vx \in \R^3\colon \vx_1+\vx_2+\vx_3=1\}$). 
This arrangement corresponds an input-space partition of a rank-$3$ maxout layer with $n=3$ inputs and $m=3$ outputs (for one choice of the parameters). 
Each pair of parallel hyperplanes delimits the linear regions of one maxout unit. }
\label{figure:shi}
\end{figure}

In Sec.~\ref{sec:maxout} we mentioned that maxout layers can compute functions whose linear regions 
correspond to intersections of Voronoi diagrams. 
Describing intersections of Voronoi diagrams is difficult, in general. 
There are some superpositions of Voronoi diagrams that correspond to hyperplane arrangements which are well understood. 
Here are two particularly nice examples: 

\begin{example}
Consider a layer with $n$ inputs and $m=n(n-1)/2$ rank-3 maxout units labeled by the pairs $(i,j)$, $1\leq i<j\leq n$. 
The input and bias weights can be chosen in such a way that the regions of unit $(i,j)$ are delimited by the two hyperplanes $H_{(i,j),s}= \{\vx\in\R^n\colon  \vx_i-\vx_j=s\}$ for $s\in\{0,1\}$. 
The intersections of the regions of all units are given by the regions of the hyperplane arrangement 
$\{H_{(i,j),s}\}_{1\leq i<j\leq n, s=1,2}$, which is known as the {\em Shi arrangement} $\mathcal{S}_n$ and has $(n+1)^{n-1}$ regions. 
The right panel of Fig.~\ref{figure:shi} illustrates the Shi arrangement $\Scal_3$. 

A related arrangement, corresponding to rank-4 maxout units, is the {\em Catalan arrangement}, which has triplets of
parallel hyperplanes, and a total of $n! C_n$ regions, where $C_n:=\frac{1}{n+1}{2n\choose
n}$ is the {\em Catalan number}.  For details on these arrangements
see~\cite[Corollary~5.1 and Proposition~5.15]{Stanley04}. 
\end{example}

\section{Other Networks}

In the introduction we mention that our analysis of rectifier and maxout networks serves as a platform to study other types of feedforward neural networks. 
Without going into many details, we exemplify this for the particular case of convolutional networks. 
A convolutional network is a network whose units take values in a space of {\em features} (real valued arrays) and whose edges pass features by convolution with {\em filters} (real valued arrays). 
Since convolution is a linear map, the preactivation function of a convolutional network is of the same form as the preactivation functions considered in this paper. Its output is a feature, but it can be written as a vector, like the $f_{l,i}$'s that we considered. 
Hence convolutional networks with piecewise linear activations fall in the class of networks that we considered here. 
The only difference lies in that the corresponding input weight matrices of convolutional networks belong to restricted classes of matrices. 

\section{Sinusoidal Boundary Experiment}
\label{sec:sinusoidal}

Here, we describe the experiments we performed to obtain
Fig.~\ref{fig:eye_candy} in the main text. 

In this experiment we considered two MLPs, of which one has a
single hidden layer with 20 hidden units and the other has two
hidden layers with 10 hidden unit each. The MLPs were trained on
the same synthetic dataset using a conjugate natural
gradient~\citep{Pascanu+Bengio-ICLR2014} which was used to
minimize the effect of optimization. We plot the best of several
runs. The shallow model misclassified 123 examples, whereas the
deep model did only 24 examples.  The two-layer model is better
at capturing a sinusoidal decision boundary, because it can
define more linear regions. 

\section{Visualizing the Behaviour of Hidden Units in Higher Layers}
\label{sec:visualization}

In this section, we describe the details on how one can visualize
the effect of folding in rectifier MLPs, discussed in
Sec.~\ref{sec:faces}.

Any piecewise linear function is fully defined by the different
linear pieces from which it is composed. Each piece is given by
its domain--a region of the input space $R_i \subseteq
\R^{n_0}$--and the linear map $f_i$ that describes its behaviour
on $R_i$. Because $f_i$ is an affine map, it can be interpreted
in the same way hidden units in a shallow model are. Namely, we
can write $f_i$ as:
\[
    f_i(\vx) = \mathbf{u}^\top \vx + c,~~~ \vx \in R_i,
\]
where $\mathbf{u}^\top $ is a row vector, and $\mathbf{u}^\top  \in
\R^{n_0}$. Then $f_i$ measures the (unnormalized) cosine distance
between $\vx$ and $\mathbf{u}^\top$.  If $\vx$ is some image,
$\mathbf{u}^\top$ is also an image and shows the pattern
(template) to which the unit responds whenever $\vx \in R_i$.

Given an input example $\vx$ from an arbitrary region $R_i$ of the
input space
we \emph{can construct the corresponding linear map} $f_i$
generated by the $j$-th unit at the $l$-th layer. Specifically,
the weight $\mathbf{u}^\top$ of the linear map $f_i$ is computed by
\begin{equation}
\label{eq:reconstr}
\mathbf{u}^\top = 
\left({\mW_{l}}\right)_{j:}
\diag\left(\mathbf{I}_{f_{l-1}>0}\left(\vx\right)\right)
\mW_{l-1}
\cdots
\diag\left(\mathbf{I}_{f_{1}>0}\left(\vx\right)\right)
\mW_{1}.
\end{equation}
A bias of the linear map can be similarly computed.

From Eq.~\eqref{eq:reconstr}, we can see that the linear map of a
specific hidden unit $f_{l,j}$ at a layer $l$ is found by keeping
track of which linear piece is used at each layer until the layer
$l$ ($\mathbf{I}_{f_{p}>0},p<l$ \todor{-- which is the indicator function}). At the end, the $j$-th row of
the weight matrix $\mW_{l}$ ($\left({\mW_{l}}\right)_{j:}$) is
multiplied. Although we present a formula specific to the
rectifier MLP, it is straightforward to adapt this to any MLP
with a piecewise linear activation, such as a convolutional
neural network with a maxout activation.

From the fact that the linear map computed by
Eq.~\eqref{eq:reconstr} depends on each sample/point $\vx$, we
\todor{need to traverse a set of points (e.g., training samples) to
identify different linear responses of a hidden unit. 
While this does not give all possible responses, if the set of 
points is large enough, we can get sufficiently many to provide 
a better understanding of its behaviour. }

We trained a rectifier MLP with three hidden layer on Toronto
Faces Dataset (TFD)~\citep{Susskind2010}. The first two hidden
layers have 1000 hidden units each and the last one has 100
units. 

We trained the model using stochastic gradient descent. We used, as
regularization, an $L_2$ penalty with a coefficient of $10^{-3}$, dropout on
the first two hidden layers (with a drop probability of $0.5$) and we enforced
the weights to have unit norm column-wise by projecting the weights after each
SGD step. We used a learning rate of $0.1$ and the output layer is composed of
sigmoid units.
The purpose of these regularization schemes, and the sigmoid
output layer is to obtain cleaner and sharper filters. The model
is trained on fold 1 of the dataset and achieves an error of
20.49\% which is reasonable for this dataset and a
non-convolutional model. 

Since each of the first layer hidden units only responds to a
single linear region, we directly visualize the learned weight
vectors of the 16 randomly selected units in the first hidden
layer. These are shown on the top row of
Fig.~\ref{fig:filtersTFD}.

On the other hand, for any other hidden layer, we randomly pick
20 units per layer and visualize the most interesting four units
out of them based on the maximal Euclidean distance between the
different linear responses of each unit. \todor{The linear responses of
each unit are computed by clustering the responses obtained 
on the training set (we only consider those responses where the activation 
was positive) into four clusters using K-means algorithm. 
We show the representative linear response in each of the clusters (see the 
second and third rows of Fig.~\ref{fig:filtersTFD})}. 

Similarly, we visualize the linear maps learned by each of the
output unit. For the output layer, we show the visualization of
all seven units in Fig.~\ref{fig:outputTFD}

By looking at the differences among the distinct linear regions
that a hidden unit responds to, we can investigate the type of
invariance the unit learned. In Fig.~\ref{fig:diffTFD}, we show
the differences among the four linear maps learned by the last
visualized hidden unit of the third hidden layer (the last column of the
 visualized linear maps). 

From these visualizations, we can see that a hidden unit learns
to be invariant to more abstract and interesting translations at
higher layers. We also see the types of invariance of a hidden
unit in a higher layer clearly. 

\begin{figure}[ht]

    \begin{minipage}{0.15\textwidth}
        \centering
        \textcolor{white}{L1}
    \end{minipage}  
    \begin{minipage}{0.33\textwidth}
        \centering
        Linear maps
    \end{minipage}  
    \begin{minipage}{0.33\textwidth}
        \centering
        Linear maps \\
        (Normalized)
    \end{minipage}
    
    \begin{minipage}{0.15\textwidth}
        \centering
        L1
    \end{minipage}  
    \begin{minipage}{0.33\textwidth}
        \centering
        \includegraphics[width=0.99\columnwidth, clip=true, trim=4.5cm 1.5cm 4cm 1.5cm]{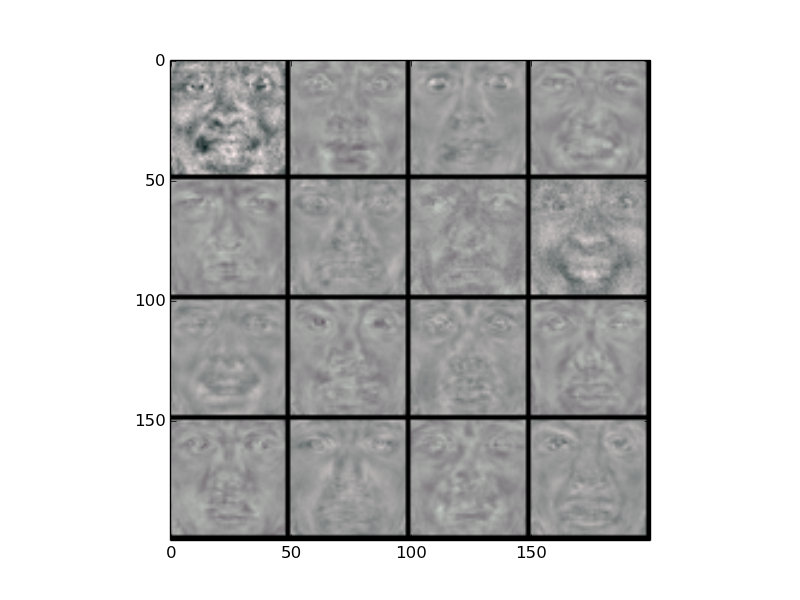}
    \end{minipage}  
    \begin{minipage}{0.33\textwidth}
        \centering
        \includegraphics[width=0.99\columnwidth, clip=true, trim=4.5cm 1.5cm 4cm 1.5cm]{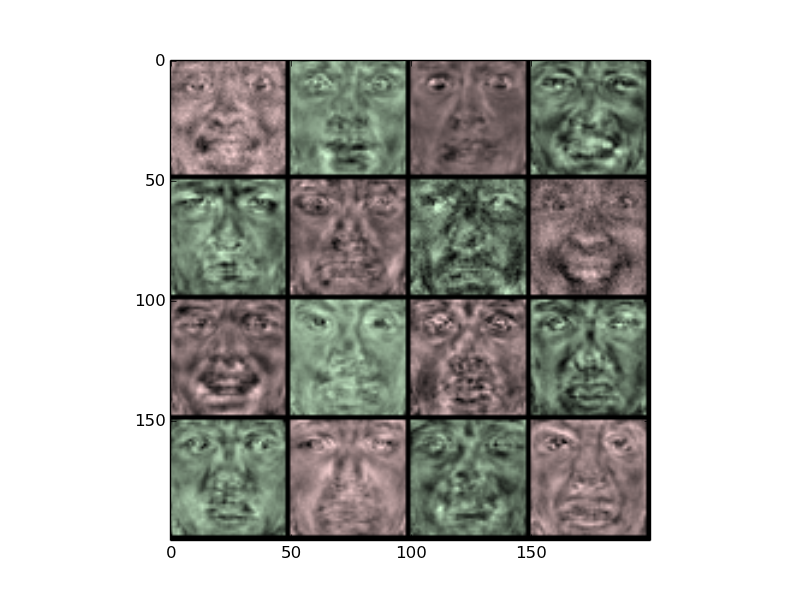}
    \end{minipage}

    \begin{minipage}{0.15\textwidth}
        \centering
        L2
    \end{minipage}  
    \begin{minipage}{0.33\textwidth}
        \centering
        \includegraphics[width=0.99\columnwidth, clip=true, trim=4.5cm 1.5cm 4cm 1.5cm]{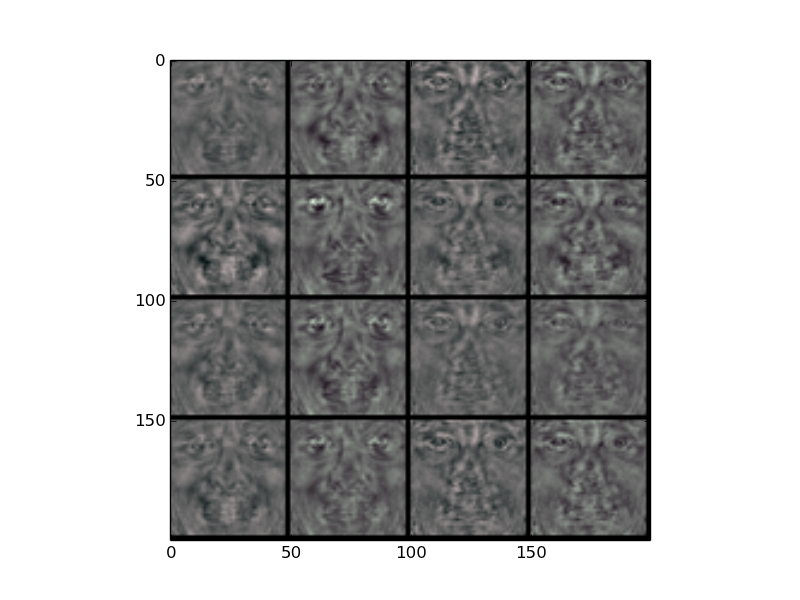}
    \end{minipage}
    \begin{minipage}{0.33\textwidth}
        \centering
        \includegraphics[width=0.99\columnwidth, clip=true, trim=4.5cm 1.5cm 4cm 1.5cm]{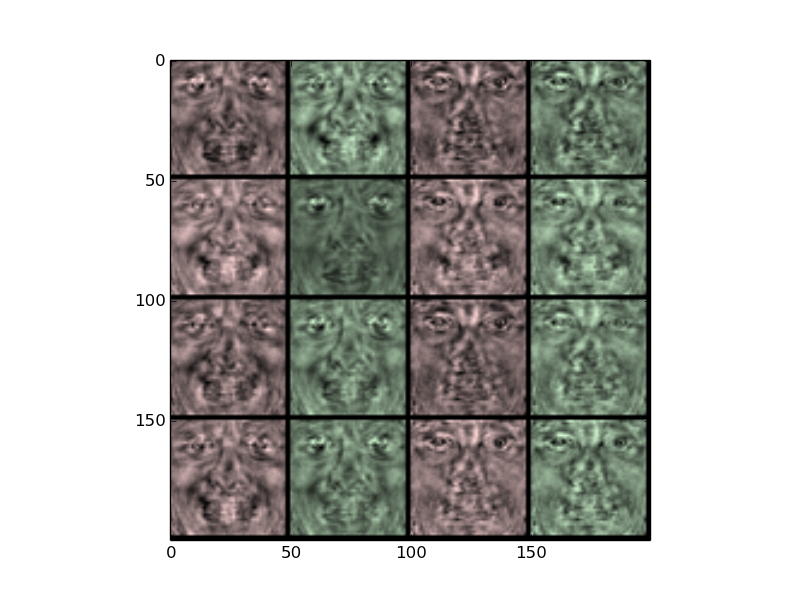}
    \end{minipage}
    
    \begin{minipage}{0.15\textwidth}
        \centering
        L3
    \end{minipage}  
    \begin{minipage}{0.33\textwidth}
        \centering
        \includegraphics[width=0.99\columnwidth, clip=true, trim=4.5cm 1.5cm 4cm 1.5cm]{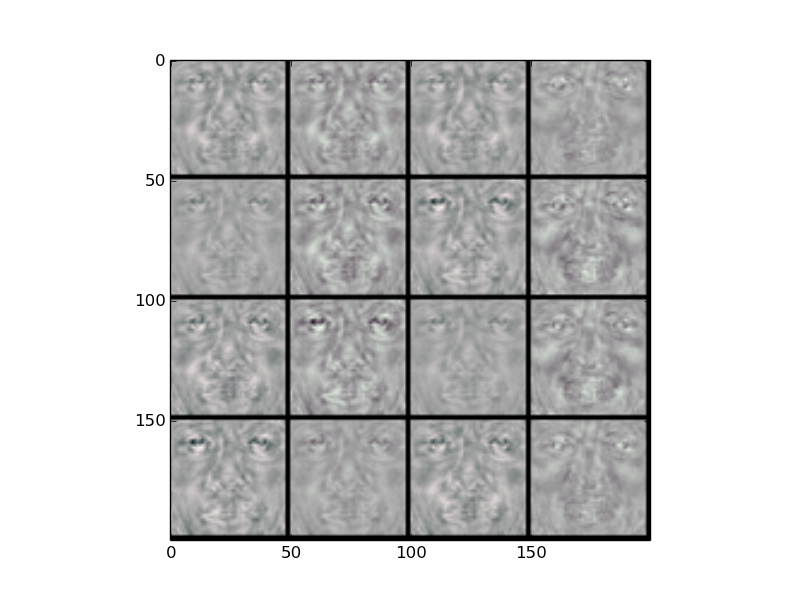}
    \end{minipage}
    \begin{minipage}{0.33\textwidth}
        \centering
        \includegraphics[width=0.99\columnwidth, clip=true, trim=4.5cm 1.5cm 4cm 1.5cm]{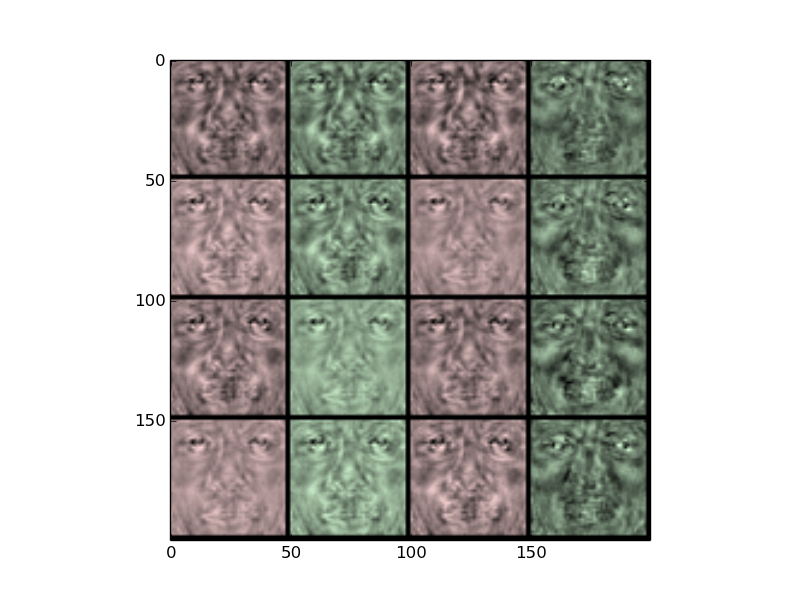}
    \end{minipage}
    
    \caption{
        Visualizations of the linear maps learned by each hidden
            layer of a rectifier MLP trained on TFD dataset. Each
            row corresponds to each hidden layer. The first
            column shows the unnormalized linear maps, and the
            last column shows the normalized linear maps showing
            only the direction of each map. Colors are only used
            to improve the distinction among different filters.
    }
\label{fig:filtersTFD}
\end{figure}

\begin{figure}[th]
    \centering
    \hfill
    \begin{minipage}{0.24\textwidth}
        \centering
        \includegraphics[width=0.99\columnwidth, clip=true, trim=4.5cm 1.5cm 4cm 1.5cm]{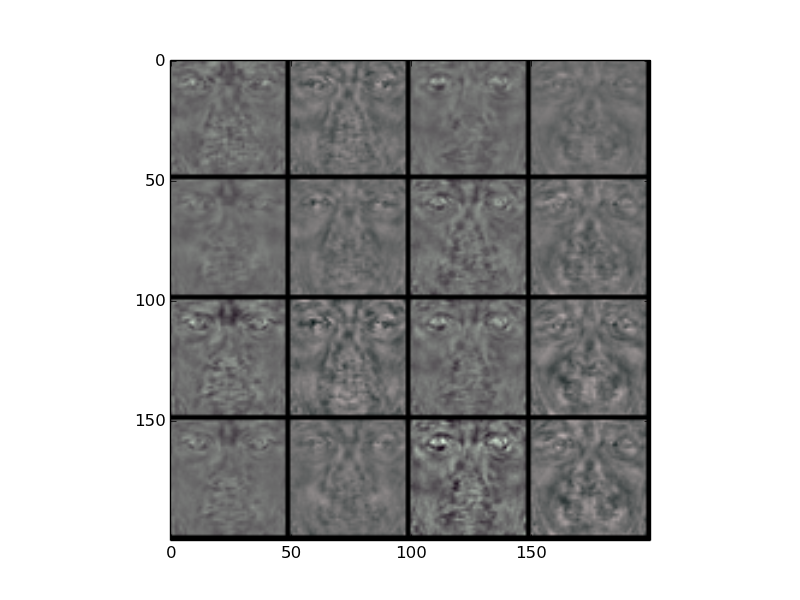}
    \end{minipage}
    \begin{minipage}{0.24\textwidth}
        \centering
        \includegraphics[width=0.99\columnwidth, clip=true, trim=4.5cm 1.5cm 4cm 1.5cm]{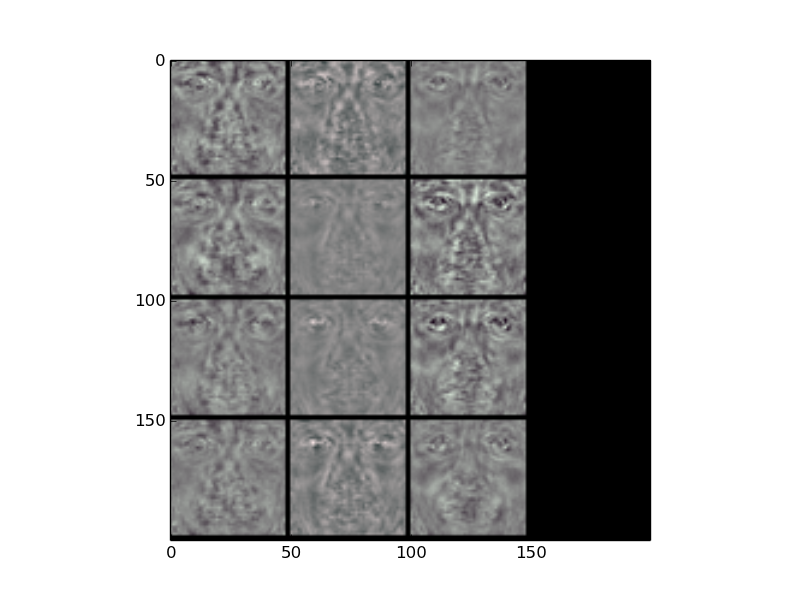}
    \end{minipage}
    \hfill
    \begin{minipage}{0.24\textwidth}
        \centering
        \includegraphics[width=0.99\columnwidth, clip=true, trim=4.5cm 1.5cm 4cm 1.5cm]{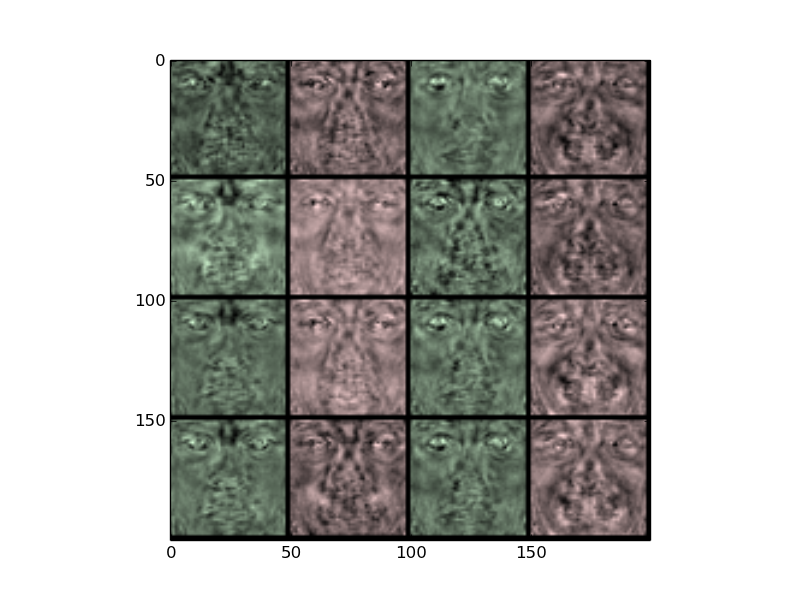}
    \end{minipage}
    \begin{minipage}{0.24\textwidth}
        \centering
        \includegraphics[width=0.99\columnwidth, clip=true, trim=4.5cm 1.5cm 4cm 1.5cm]{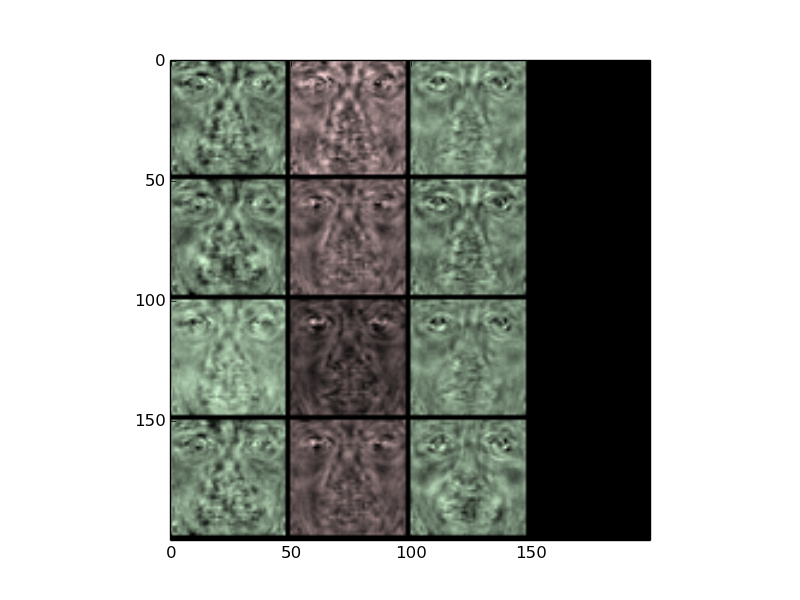}
    \end{minipage}
    \hfill

    \begin{minipage}{0.48\textwidth}
    \centering
    Linear maps
    \end{minipage}
    \hfill
    \begin{minipage}{0.48\textwidth}
    \centering
    Normalized Linear Maps
    \end{minipage}
    
\caption{
    Linear maps of the output units of the rectifier MLP trained
        on TFD dataset. The corresponding class labels for the
        columns are (1)
        anger, (2) disgust, (3) fear, (4) happy, (5) sad, (6)
        surprise and (7) neutral.
}
\label{fig:outputTFD}
\end{figure}

\begin{figure}[ht]
    \centering
    \begin{minipage}{0.33\textwidth}
        \centering
        \includegraphics[width=0.99\columnwidth, clip=true, trim=4.5cm 1.5cm 4cm 1.5cm]{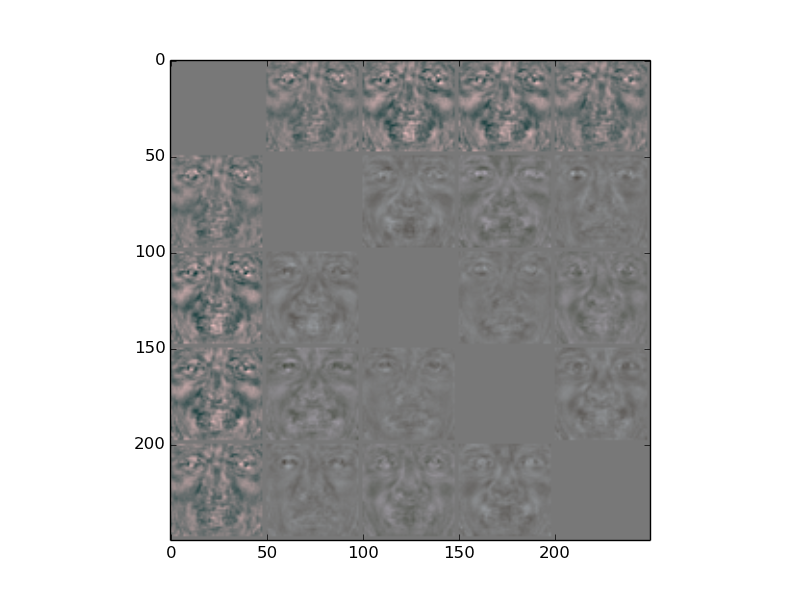}
    \end{minipage}
    \begin{minipage}{0.33\textwidth}
        \centering
        \includegraphics[width=0.99\columnwidth, clip=true, trim=4.5cm 1.5cm 4cm 1.5cm]{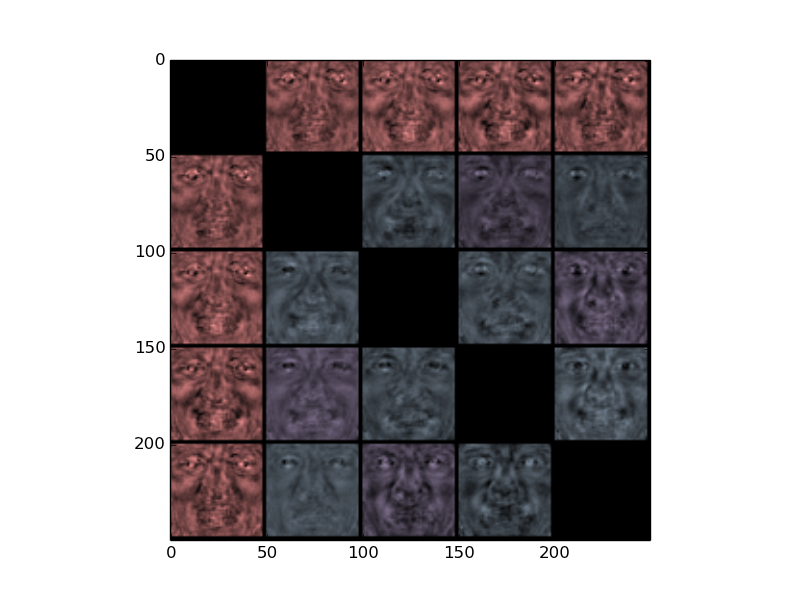}
    \end{minipage}

    \begin{minipage}{0.33\textwidth}
        \centering
        Difference Matrix
    \end{minipage}
    \begin{minipage}{0.33\textwidth}
        \centering
        Normalized Difference Matrix
    \end{minipage}

    \caption{Differences among the distinct linear regions of a
        single hidden unit at the third hidden layer of the
            rectifier MLP trained on TFD.}
    \label{fig:diffTFD}
\end{figure}

\citet{Zeiler+et+al-arxiv2013b} attempt to visualize the
behaviour of units in the upper layer, specifically, 
of a deep convolutional network with rectifiers. This approach is
to some extent similar to our approach proposed here, except that
we do not make any other assumption beside that a hidden unit in
a networks uses a piece-wise linear activation function. 

The perspective from which the visualization is considered is
also different. \citet{Zeiler+et+al-arxiv2013b} approaches the
problem of visualization from the perspective of (approximately)
inverting the feedforward computation of the neural network,
whereas our approach is derived by identifying a set of linear
maps per hidden unit.

This difference leads to a number of minor differences in the
actual implementation. For instance,
       \citet{Zeiler+et+al-arxiv2013b} approximates the inverse
       of a rectifier by simply using another rectifier. On the
       other hand, we do not need to approximate the inverse of the
       rectifier. Rather, we try to identify regions in the input
       space that maps to the same activation.

\begin{figure}[ht]
    \centering 
        \includegraphics[width=0.45\columnwidth, clip=true, trim=1cm 1cm 1cm 1cm]{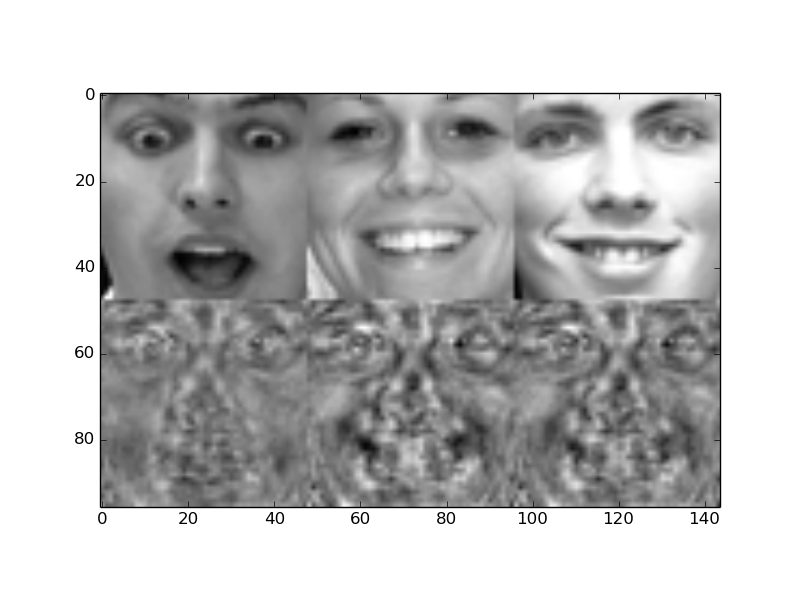}
\caption{The visualization of three distinct points in the input
    space that map to the same activation of a randomly chosen
        hidden unit at the third hidden layer. The top row shows
        three points ({\emph not} training/test samples) in the
        input space, and for each point, we plot the linear map
        below.}
\label{fig:corresponding}
\end{figure}

In our approach, it is possible to visualize an actual point in
the input space that maps to the same activation of a hidden
unit. In Fig.~\ref{fig:corresponding}, we show three distinct
points in the input space that activates a randomly chosen hidden
unit in the third hidden layer to be exactly $2.5$. We found
these points by first finding three training samples that map to
an activation close to $2.5$ of the same hidden unit, and from
each found sample, we search along the linear map (computed by
Eq.~\eqref{eq:reconstr}) for a point that exactly results in the
activation of $2.5$. Obviously, the found point is {\emph not}
one of the training samples. 
From those three points, we can see that the chosen hidden unit
responds to a face with wide-open mouth and a set of open eyes
while being invariant to other features of a face (e.g.,
eye brows). \todor{By the pertubation analysis, we can assume that there 
is an open set around each of these points that are identified 
by the hidden unit.}

\end{document}